\newtheorem{notation}{Notation}[section] 
\newtheorem{question}{Question}[section] 
\newtheorem{proposition}{Proposition}[section] 
\newtheorem{properties}{Properties}[section]
\newtheorem{theorem}{Theorem}[section]
\newtheorem{corollary}{Corollary}[section]
\newtheorem{lemma}{Lemma}[section]
\newenvironment{proof}{{\noindent\it Proof}\quad}{\hfill $\square$\par} 
\numberwithin{figure}{section}
\numberwithin{equation}{section}
\begin{document}

\title{On the Optimal Expressive Power of ReLU DNNs and Its Application in Approximation with Kolmogorov Superposition Theorem}

\author{Juncai He\footnotemark[1]}
\date{} 

\maketitle
\renewcommand{\thefootnote}{\fnsymbol{footnote}} 
\footnotetext[1]{Computer, Electrical and Mathematical Science and Engineering Division, King Abdullah University of Science and Technology, Thuwal 23955, Saudi Arabia (juncai.he@kaust.edu.sa).}

\maketitle

\begin{abstract}
This paper is devoted to studying the optimal expressive power of ReLU deep neural networks (DNNs) and its application in approximation via the Kolmogorov Superposition Theorem. We first constructively prove that any continuous piecewise linear functions on $[0,1]$, comprising $\mathcal O(N^2L)$ segments, can be represented by ReLU DNNs with $L$ hidden layers and $N$ neurons per layer. Subsequently, we demonstrate that this construction is optimal regarding the parameter count of the DNNs, achieved through investigating the shattering capacity of ReLU DNNs. Moreover, by invoking the Kolmogorov Superposition Theorem, we achieve an enhanced approximation rate for ReLU DNNs of arbitrary width and depth when dealing with continuous functions in high-dimensional spaces.
\end{abstract}

%% main text
\section{Introduction and motivation}
\label{sec:intro}
In the field of deep learning, the efficacy and versatility of neural networks are influenced by two fundamental aspects: expressivity and approximation properties. The expressivity of deep neural networks (DNNs) pertains to their capability to precisely represent a wide array of complex functions. Specifically, the expressive power of ReLU DNNs lies in their ability to capture complicated structures through piecewise linear functions, as these networks are fundamentally constructed using the Rectified Linear Unit (ReLU)~\cite{nair2010rectified} activation function.
On the other hand, approximation properties deal with a network's capacity to accurately approximate target functions, ensuring it can effectively handle complicated functions with precision. Understanding and enhancing both the expressivity and approximation properties of deep learning architectures are crucial endeavors that hold the key to unlocking the full potential of artificial intelligence and advancing the boundaries of deep learning technology.

As for the expressive power of ReLU neural networks (NNs), a highlight property of ReLU DNNs is that they are essentially continuous piecewise linear (CPwL) functions, given that compositions of CPwL functions are still CPwL. Consequently, an important question arises: Are ReLU DNNs capable of accurately reconstructing any piecewise linear function in even a one-dimensional domain, such as $[0,1]$?
Indeed, it has been shown that any piecewise linear function with $N$ segments on $[0,1]$ can be expressed as a ReLU neural network function with just one hidden layer and $N$ parameters~\cite{he2020relu,shen2019nonlinear}. However, the expressive power of ReLU DNNs with two hidden layers holds greater significance than that of ReLU NNs with only one hidden layer. This is due to the crucial consideration of the parameter increase when combining two ReLU NNs.
When combining two ReLU NNs, each with one hidden layer and $N$ neurons, to form a ReLU DNN with two hidden layers and $N$ neurons in each layer, the parameter count increases quadratically, scaling as $\mathcal O(N^2)$. Conversely, when two ReLU DNNs with two hidden layers and $N$ neurons each are combined, the parameter count increases linearly, scaling as $\mathcal O(N^2)$, which is only proportional to the total number of parameters of the ReLU DNN with two hidden layers.
This property of DNNs emphasizes the crucial role of the expressive power in ReLU DNNs with multiple hidden layers, especially when creating deeper structures.

To the best of our knowledge, the following lemma cited from~\cite{shen2019nonlinear} is the pioneering result partially addressing the expressive power of ReLU DNNs with two hidden layers. Essentially, it asserts that ReLU DNNs equipped with two hidden layers and $\mathcal O(N)$ neurons per layer can accurately interpolate $N^2$ grid points on the interval $[0,1]$. However, there exist $N$ subintervals where the corresponding ReLU DNN function fails to maintain linearity, implying that the function is not an exact piecewise linear interpolation of these $N^2$ grid points.
\begin{lemma}[Lemma 2.2 in \cite{shen2019nonlinear}]\label{lemm:interpolation-shen}
For any grid points $\{ 0 = x_0 < x_1 < \cdots < x_{N^2-1} < x_{N^2} = 1 \}$ on $[0,1]$ and sample points $(x_i,y_i) \in \mathbb R^2$ for $i=0:N^2$, there exists a ReLU DNN function $f(x)$ with two hidden layers and $2N+1$ neurons at each layer such that $f(x_i) = y_i$ for all $i=0:N^2$ and $f(x)$ is linear on $[x_{i-1},x_{i}]$ except for $i=kN$ with $k=1:N$.
\end{lemma}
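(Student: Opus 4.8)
The plan is to reduce interpolation to the placement of \emph{slope jumps} (kinks) and then realize the required $\sim N^2$ kinks with only $O(N)$ neurons per layer by exploiting the composition of the two hidden layers. Concretely, I would first record the target as its piecewise-constant slope: the continuous piecewise linear interpolant $\mathcal I$ of all $N^2+1$ points is determined, up to one affine piece, by the slope increments $\delta_i := \mathcal I'(x_i^+) - \mathcal I'(x_i^-)$ at the interior nodes $x_1,\dots,x_{N^2-1}$. Writing a two-hidden-layer ReLU function as $f(x) = a x + b + \sum_{m} c_m\,\relu\!\big(g_m(x)\big)$, where each $g_m(x) = \sum_\ell w_{m\ell} h_\ell(x) + d_m$ is an affine combination of the first-layer features $h_\ell(x)=\relu(\alpha_\ell x + \beta_\ell)$, the problem becomes: place the zero-crossings of the functions $g_m$ at the prescribed interior nodes and tune the associated slope jumps so that they equal the prescribed $\delta_i$, while hitting the values $y_i$ exactly at the $N^2+1$ nodes.

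The key structural idea is a two-scale (block) decomposition. I would split the index set into $N$ blocks $B_k = \{(k-1)N, \dots, kN\}$, so that each block contains $N$ subintervals and consecutive blocks share the boundary node $x_{kN}$, and I would arrange the last subinterval of each block, $[x_{kN-1},x_{kN}]$, to be the only place where linearity is allowed to fail --- exactly the exceptional set $i=kN$ in the statement. The first hidden layer is used to build a block-localizing map (a sawtooth-type continuous piecewise linear function that is affine on each block and ``resets'' over the $N$ transition subintervals), together with a carried copy of $x$. Because this map is affine on each block, every inner function $g_m$ is affine on each block as well, so $\relu(g_m)$ contributes at most one kink per block; feeding the $2N+1$ second-layer neurons this common localized signal lets each of them address one chosen interior node \emph{per block}, giving the $\sim N^2$ kinks needed from only $O(N)$ neurons. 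The arbitrary, block-dependent node positions and slope jumps are then encoded not in the neuron count but in the dense first-to-second-layer weight matrix $(w_{m\ell})$, which carries the required $O(N^2)$ degrees of freedom; the remaining values $y_i$ are matched by the output affine part $ax+b$ and the coarse boundary data, with the transition subintervals absorbing the inter-block continuity constraints.

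The main obstacle, and the heart of the argument, is the simultaneous control in the second step: one must place the kink of $g_m$ at a \emph{different} arbitrary abscissa in each of the $N$ blocks and, at the same time, prescribe the $N$ block-dependent slope jumps, all under the rigid constraint that every $g_m$ --- being a sum of $\relu$'s --- is \emph{continuous}, which couples its block pieces and sharply limits the naive degrees of freedom. Making this work is exactly where the sawtooth-localizer and the dense weight matrix must be designed in tandem: the localizer must standardize the blocks enough that the crossing conditions become a solvable linear system in each weight row, while leaving enough residual freedom (together with the scalars $c_m$ and $d_m$) to set the slope jumps. Once such features and weights are exhibited, the verification is bookkeeping: check that $f$ interpolates all $N^2+1$ nodes, and that the only subintervals carrying an unintended extra kink are the $N$ reset subintervals $[x_{kN-1},x_{kN}]$, on which $f$ need not be affine, so that $f$ is linear on every remaining subinterval as claimed.
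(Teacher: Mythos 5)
A point of reference first: the paper does not actually prove this lemma---it is imported verbatim from \cite{shen2019nonlinear}---so the natural in-paper comparison is the construction behind Theorem~\ref{thm:interpolation} (the strengthening with widths $3N+1$ and $3(N-2)$ and no exceptional intervals). Your mechanism is the right one, and it is the same mechanism driving both that construction and the cited proof: place all first-layer kinks at the block-boundary/transition nodes $x_{kN-1},x_{kN}$, so that every second-layer pre-activation $g_m$ is affine on each block interior $[x_{(k-1)N},x_{kN-1}]$; then $\relu(g_m)$ contributes at most one kink per block, at a zero crossing whose location is freely tunable block by block, and the continuity of $g_m$ across blocks is absorbed by the unconstrained segments (and spurious kinks) in the exempt intervals $[x_{kN-1},x_{kN}]$. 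Your degrees-of-freedom accounting for the dense weight matrix is also correct in spirit.

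The gap sits exactly in the step you flag as the heart of the argument, and it is a \emph{sign obstruction}, not merely a question of solving a linear system. Since $\relu$ composed with an affine map is convex, $\relu(g_m)$ is convex on each block interior, so every slope jump it creates there is nonnegative; hence every kink contributed by the term $c_m\relu(g_m)$, in \emph{every} block, has the sign of $c_m$. Your assignment ``neuron $m$ addresses interior node index $j$ in all $N$ blocks'' therefore cannot realize prescribed jumps $\delta_{(k-1)N+j}$ whose sign varies with $k$, which is the generic situation; no choice of the weight row $(w_{m\ell})$, of $d_m$, or of $c_m$ can repair this, because the constraint is structural. Moreover, a single block may need all of its $N-2$ interior jumps positive (locally convex data) and another block all negative, so one genuinely needs at least $N-2$ neurons with $c_m>0$ and at least $N-2$ with $c_m<0$: this is precisely why the width is $2N+1\approx 2N$ rather than $N+O(1)$. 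The standard repair fits your budget: use two neurons per within-block index, one for upward and one for downward jumps, switching the unused one off in a given block by making its pre-activation a negative constant there. The paper's Theorem~\ref{thm:interpolation} meets the same obstruction and resolves it with \emph{three} second-layer neurons per index via the symmetric combination $\sigma(\phi_k^+)-\sigma(\phi_k)+\sigma(-\phi_k^-)$, which additionally eliminates the trifling-region defects and yields exact interpolation. With the sign issue repaired, the rest of your outline (anchoring values through the block-affine part, verifying linearity off the exempt intervals) goes through.
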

Due to the presence of these intervals where $f(x)$ does not maintain strict linearity, the authors in \cite{lu2021deep} denote them as trifling regions. They subsequently construct an auxiliary ReLU neural network to regulate the variation of $f(x)$ within these trifling regions, a crucial step for $L^\infty$ error estimates.

Consequently, we are interested in the following question, which inquires about the potential for an exact representation for piecewise linear interpolation:
\begin{question}\label{que:N2}
On the interval $[0,1]$, is it possible to reconstruct any piecewise linear functions with $N^2$ segments using ReLU DNNs with two hidden layers and $\mathcal O(N)$ neurons in each layer?
\end{question}

This naturally leads us to pose a corresponding question for ReLU DNNs with general depth and width:
\begin{question}\label{que:NL}
On the interval $[0,1]$, is it possible to reconstruct any piecewise linear functions with $P = \mathcal O(N^2L)$ segments using ReLU DNNs with $L$ hidden layers and $N$ neurons in each layer? Furthermore, is this representation optimal in relation to the number of parameters?
\end{question}
In this paper, we initially present a constructive proof for Question~\ref{que:N2}, employing a symmetric construction approach for ReLU DNNs with two hidden layers. We then extend this expressivity result to ReLU DNNs of arbitrary depth. In addition, we demonstrate that the expressivity of ReLU DNNs for CPwL functions in our result is optimal, particularly with respect to the number of parameters of the ReLU DNNs.

On the other hand, it is very natural to combine the expressivity of ReLU DNNs on one-dimensional space and the Kolmogorov Superposition Theorem (KST). Although there are many different versions of KST, the key concept of KST is to decompose a multivariate function into a composition of several univariate functions. For example, one version of KST in \cite{lorentz1962metric,lorentz1966approximation} shows that any continuous $f(\bm x)$ on $[0,1]^d$ for any $d\ge 2$ can be written as $f(\bm x) = \sum_{k=0}^{2d} g\left(\sum_{i=1}^d\lambda_i\phi_k(x_i)\right)$ for any $\bm x = (x_1, \cdots, x_d)\in[0,1]^d$ where $\phi_k$ and $g$, which are called K-inner or K-base functions and K-outer function, are all continuous univariate functions. By using the expressive power of ReLU DNNs for CPwL functions on $[0,1]$ and their approximation properties, we obtain a generic approximation property of ReLU DNNs with arbitrary depth and width for any continuous functions on $[0,1]^d$. Upon a deeper analysis of KST, we demonstrate that ReLU DNNs, equipped with $L$ hidden layers and $N$ neurons in each layer, can approximate functions within a specific class at a rate of $\mathcal O(N^{-2}L^{-1})$. This finding not only breaks the curse of dimensionality but also achieves the optimal error rate to the best of our knowledge.

The structure of this paper is as follows. In the following Section~\ref{sec:intro}, we provide a discussion on the related work pertaining to our primary findings. In Section~\ref{sec:notation}, we introduce the notations which will be utilized throughout the rest of the paper. In Section~\ref{sec:expres}, we present the constructive proof demonstrating the expressivity of ReLU DNNs. In Section~\ref{sec:optimalrep}, we offer a proof asserting the optimal representation of CPwL functions by ReLU DNNs. In Section~\ref{sec:approx}, we introduce two new approximation results, derived from the Kolmogorov Superposition Theorem. Finally, In Section~\ref{sec:conclusion}, we finish the paper with some concluding remarks.

\subsection{Related work}
\paragraph{Expressive power of NNs}
Regarding the expressive power of ReLU DNNs for CPwL functions, they are essentially equivalent. Specifically, any CPwL function on $\mathbb R^d$ can be represented by a ReLU DNN with no more than $\lceil \log_2(d+1) \rceil$ hidden layers~\cite{arora2018understanding}. However, as indicated in \cite{he2020relu}, this representation could require an extraordinarily large number of parameters. Leveraging the properties of linear finite elements, a more concise and efficient representation was established for any linear finite element function (CPwL function with simplex meshes) in \cite{he2020relu}. Furthermore, for any CPwL function on a uniform mesh on $[0,1]^d$ with a total number of $P$ linear regions (or elements in finite element method terminology~\cite{ciarlet2002finite}), \cite{yarotsky2018optimal} proposed a representation using a very deep ReLU NN architecture. This representation maintained a constant width but needed $\mathcal O(P)$ hidden layers.
However, the question remains open as to whether any CPwL functions on $[0,1]^d$ with $P$ linear regions can be represented by ReLU DNNs of arbitrary depth and width using only $\mathcal O(P)$ parameters. This question holds significant practical implications, as training very deep yet narrow ReLU NNs is typically challenging. Additionally, wider neural networks often yield better generalization performance \cite{zagoruyko2016wide}.
Regrettably, even for $d=1$, this question is only partially answered, as highlighted in Lemma~\ref{lemm:interpolation-shen}. In this paper, we constructively demonstrate that ReLU DNNs of arbitrary depth and width with $\mathcal O(P)$ parameters can accurately represent any CPwL functions with $P$ segments on $[0,1]$. A similar finding for $d=1$ is also documented in \cite{daubechies2022nonlinear}, which employed a revised network architecture and complex construction. Furthermore, in this work, we also investigate the optimality of the representation and its application with KST for new approximation results.

Rather than representing any CPwL function, several interesting results in \cite{montufar2014number,telgarsky2015representation,mhaskar2016deep,lu2017expressive,mehrabi2018bounds} demonstrate that ReLU networks of sufficient depth, equipped with $\mathcal O(P)$ parameters, can produce specific CPwL functions that possess a number of segments surpassing any power of $P$. 
Among these findings, the hat basis function, denoted as $g(x) = 2{\rm ReLU}(x) - 4{\rm ReLU}(x-1/2) + 2{\rm ReLU}(x-1)$, plays a pivotal role. 
The initial observation that $g^i(x):= \overbrace{g\circ g \circ \cdots \circ g}^{i}(x)$ can be employed to approximate the square function $s(x) = x^2$ on the interval $[-1,1]$ with an exponential convergence rate was made in \cite{yarotsky2017error}. A more comprehensive and systematic analysis of this result from the hierarchical basis perspective is presented in \cite{he2022relu}. To summarize, \cite{he2022relu} demonstrated that the composition of $g(x)$ precisely generates the hierarchical basis employed for approximating the square function. Moreover, \cite{he2022relu} established that if the function can be expressed as the infinite series $\sum_{k=0}^\infty a_k g^k(x)$ for some $a_k \in \mathbb R$, it must be a quadratic function.

In addition, the expressive capabilities of neural networks utilizing ${\rm ReLU}^k$ activation functions have also been extensively investigated in works such as \cite{li2019better,xu2020finite,chen2022power}. These studies primarily concentrate on the representation of high-order splines within the interval $[0,1]$ or global polynomials on $\mathbb R^d$.

\paragraph{Approximation properties of ReLU NNs}
A considerable amount of research has been devoted to understanding the approximation properties of neural networks. An intuitive criterion to categorize these studies is based on the depth of the investigated neural networks. Exploring approximation properties for shallow (single-hidden-layer) and deep (more-than-one-hidden-layer) networks yields notably different insights.
From the 1990s, several studies, including \cite{hornik1989multilayer, cybenko1989approximation, jones1992simple, leshno1993multilayer, ellacott1994aspects, pinkus1999approximation}, have focused on examining the approximation properties of single hidden layer neural networks, mainly demonstrating their qualitative approximation properties.
In terms of quantitative error estimates, research in  \cite{barron1993universal,klusowski2018approximation, e2019priori, siegel2020approximation, e2021kolmogorov, siegel2022sharp, siegel2022high} has primarily focused on understanding and improving the approximation rate and defining and characterizing the underlying function spaces. 
These are now generally referred to as the Barron space~\cite{barron1993universal,e2019priori} or the variation space~\cite{siegel2022sharp,siegel2022high}, which is a generalization of the Barron space for ${\rm ReLU}^k$ activation functions from a different perspective.

Regarding the approximation properties of deep ReLU NNs, there are primarily two distinct streams of research. Both are based on significant findings about the expressivity of ReLU DNNs.
The first stream stems from the expressive power of ReLU DNNs for CPwL functions mentioned earlier and their combination with the bit-extraction technique \cite{bartlett1998almost,bartlett2019nearly} for deep ReLU NNs. 
For instance, \cite{yarotsky2018optimal} first accomplished an approximation rate of $\mathcal O(P^{-\frac{2}{d}})$ for Lipschitz continuous functions utilizing deep but constant-width ReLU NNs equipped with $\mathcal O(P)$ parameters. This achievement was initially facilitated by using a deep but narrow ReLU DNN to represent the CPwL interpolation of $f(\bm x)$ on $[0,1]^d$ with a uniform mesh, resulting in a rate of $\mathcal O(P^{-\frac{1}{d}})$. Subsequently, this rate was enhanced to $\mathcal O(P^{-\frac{2}{d}})$ through the application of the bit-extraction technique.
Additionally, by employing Lemma\ref{lemm:interpolation-shen} to form a projection mapping that shatters the domain $[0,1]^d$ into subcubes, and in combination with the bit-extraction technique, several approximation results were established in \cite{shen2019nonlinear,shen2020deep,lu2021deep,shen2022optimal} for general ReLU DNNs. For example, it was demonstrated that ReLU DNNs with $\mathcal O(L)$ hidden layers and $\mathcal O(N)$ neurons in each layer can achieve a rate of $\mathcal O(N^{-2/d}L^{-2/d})$ for any Lipschitz continuous function on $[0,1]^d$, extending the result in \cite{yarotsky2018optimal} to ReLU DNNs with arbitrary width.

The other stream is based on the earlier mentioned representation of the hierarchical basis approximation for the square function $s(x) = x^2$ on $[-1,1]$ using deep ReLU NNs. The core insight is that deep ReLU NNs can efficiently emulate the multiplication operation. Following this revelation, several research studies have been carried out to develop various exponential error bounds for classic or modified ReLU DNNs across different function families or measurements. For example, \cite{yarotsky2017error} established the first exponential approximation rate for a broad function class using ReLU DNNs with a limited width. Then, \cite{lu2017expressive} introduced a novel, more uniform network architecture to replicate the results in \cite{yarotsky2017error}. \cite{e2018exponential} further improved the network structure by incorporating an architecture resembling that of ResNet~\cite{he2016deep, he2016identity}, thus achieving an exponential convergence rate for a specific class of analytic function. \cite{montanelli2019new, montanelli2021deep, lu2021deep} investigated approximation properties on Koborov space, bandlimited functions, and $C^s$ functions by approximating sparse grids, truncated Chebyshev series, and local Taylor expansions. A series of results~\cite{opschoor2020deep, opschoor2019exponential, guhring2020error, marcati2023exponential} for different function spaces and norms have been acquired by utilizing approximation properties of finite element methods.

\paragraph{KST and its applications in DNNs}
The first iteration of the Kolmogorov Superposition Theorem (KST) was established in \cite{kolmogorov1957representation}, which involved $2d+1$ K-outer functions and $d(2d + 1)$ K-inner functions. Subsequent work aimed to decrease the count of inner and outer functions and to enhance their smoothness \cite{sprecher1965representation,sprecher1965structure,lorentz1966approximation,fridman1967improvement,sprecher1972improvement,sprecher1993universal,braun2009constructive,actor2017algorithm}. In this paper, we focus on a streamlined construction in \cite{lorentz1962metric,lorentz1966approximation}, comprising one K-outer function and $2d+1$ K-inner functions. It has been established that $2d+1$ is the lowest feasible count of inner functions \cite{hattori1993dimension}.
Regarding the continuity of the K-inner functions, \cite{fridman1967improvement} first demonstrated that the smoothness of K-inner functions in the original KST can be improved to be Lipschitz continuous. This smoothness level is optimal, as replacing the K-inner functions with continuously differentiable ones prevents the representation of some analytic functions via a KST-type formula \cite{vitushkin1964proof}. As noted in \cite{lorentz1966approximation}, the continuity of $\phi_k$ can be enhanced to become Lipschitz continuous with any Lipschitz constant within the range $(0,1)$. For a comprehensive historical overview of KST and a simplified proof for the two-dimensional case, readers are referred to \cite{morris2021hilbert}.

It's natural and intuitive to apply the KST to the study of neural networks, and this has been an active area of research in the field even before the advent of deep learning. One of the earliest attempts to utilize KST as a feed-forward neural network is seen in \cite{hecht1987kolmogorov}. Furthermore, the authors in \cite{igelnik2003kolmogorov} suggested an algorithm for a neural network that uses cubic spline functions to approximate both the inner and outer functions of the KST.
However, a significant challenge is that the K-outer function $g$ is dependent on $f$, and can vary greatly even if $f$ is smooth, as elucidated in \cite{girosi1989representation}. This implies that attempting to replicate the construction process of KST using DNNs could still result in the curse of dimensionality. For instance, the author in \cite{schmidt2021kolmogorov} utilized a version of KST from \cite{braun2009application,braun2009constructive}, approximated the K-inner functions with a deep ReLU neural network with $P$ total parameters, and achieved a rate of $\mathcal O(P^{-\frac{\alpha}{d}})$ for any $f(\bm x)$ on $[0,1]^d$ with $H^\alpha$ Hölder continuity.
After a detailed study of the constructions of K-outer functions as laid out in \cite{braun2009constructive}, the authors in \cite{montanelli2020error} defined a constrained function class that is characterized by bounded growth rates in constructing K-outer functions, consequently resulting in Lipschitz continuous K-outer functions. 
When dealing with target functions from this class, they utilized very deep ReLU NNs  with a constant width and $\mathcal O(P)$ total parameters to achieve an approximation rate of $\mathcal O\left(P^{-\frac{1}{\log(d)}}\right)$.
Recently, by directly assuming that the K-outer function in~\cite{lorentz1966approximation} is Lipschitz continuous and restricting the target function to this class, the authors in \cite{lai2021kolmogorov} demonstrated a rate of $\mathcal O\left(P^{-\frac{1}{2}}\right)$ for ReLU DNNs with two hidden layers and $\mathcal O(P)$ parameters.

\section{Notation and preliminary}
\label{sec:notation}
In this section, we introduce some notations and preliminary results which will be used in the remaining context.

Here, we first define a ReLU DNN function $f(\bm x)$ with $L$ hidden layers on $\mathbb R^d$ as
\begin{equation*}
    \begin{cases}
        f^0(\bm x) &=  \bm x,\\
        f^\ell (\bm x) &= \sigma\left(W^\ell f^{\ell-1}(\bm x) + b^\ell \right), \quad \ell=1:L\\
        f(\bm x) &= W^{L+1} f^L(\bm x) + b^{L+1},
    \end{cases}
\end{equation*}
where 
\begin{equation*}
    \sigma(x) = {\rm ReLU}(x) := \max\{0,x\},
\end{equation*}
and
\begin{equation*}
    W^\ell: \mathbb R^{N_{\ell-1}} \mapsto \mathbb R^{N_\ell}
\end{equation*}
with $N_0 = \mathbb R^d$ and $N_{L+1} = 1$.

\begin{notation}\label{not:C}
    We first introduce some notations about continuous functions.
    \begin{itemize}
    \item For continuous function $f: [a,d] \mapsto \mathbb R$ , we introduce the modulus of continuity of $f$ defined via
\begin{equation*}
    \omega_g(r) := \sup_{x,y\in[a,b]} \left\{ \left|f(x) - f(y)\right|:  |x-y| \le r  \right\},
\end{equation*}
for any $r>0$.
\item For Lipschitz continuous functions, we denote
\begin{equation*}
    {\rm Lip}_C(\Omega) := \left\{ f: |f(\bm x)-f(\bm y)| \le C\|\bm x-\bm y\|, ~ \bm x, \bm y\in \Omega\right\}
\end{equation*}
as the function class defined on $\Omega \subset \mathbb R^d$ with Lipschitz constant less than $C$.
\item For any continuous function $f(\bm x)$ on $\Omega \subset \mathbb R^d$, we define the $L^{\infty}$ norm as
\begin{equation*}
    \|f\|_{L^{\infty}(\Omega)} = \sup_{\bm x\in \Omega} \left|f(x)\right|.
\end{equation*}
    \end{itemize}
\end{notation}

\begin{notation}\label{not:N}
We then define the following notations about function classes of ReLU DNNs, CPwL functions, and CPwL interpolations.
\begin{itemize}
\item For ReLU DNNs with $L$ hidden layers and $N_\ell$ neurons at $\ell$-th hidden layer, we define the following notation
    \begin{equation*}
        \mathcal N_{(N_1, \cdots, N_L)} = \mathcal N_{N_{1:L}}.
    \end{equation*}
    In particular, if the $N_\ell = N$ for all $\ell = 1:L$, we denote 
    \begin{equation*}
        \mathcal N_{(N_1, \cdots, N_L)} = \mathcal N_N^L,
    \end{equation*}
    and call $N$ as the width of the network.
    
\item For the set of piecewise linear functions on $[0,1]$ with at most $N$ segments, which is also known as the adaptive linear finite element space with at most $N$ elements~\cite{devore1998nonlinear}, we denote it as $\mathcal A_N$. More precisely, for any $f(x) \in \mathcal A_N$, there exists grid points 
\begin{equation*}
    \left\{ 0 = x_0 \le x_1 \le \cdots < x_{N-1} \le x_N = 1\right\},
\end{equation*}
such that 
\begin{equation*}
    \left. f(x) \right|_{[x_{i-1}, x_{i}]} ~~\text{is a linear function}\quad \forall i = 1:N,
\end{equation*}
if $x_{i-1} \neq x_{i}$.
\item For any grid points on $[a,b]$ with $N$ segments, we define them in a set as
\begin{equation*}
    \mathcal T =\left\{a = x_0 < x_1<\cdots < x_{N-1} < x_N=b \right\}.
\end{equation*}
Then, $\mathcal I_{\mathcal T}(f)$ denotes the CPwL interpolation of the continuous $f(x)$ on the grid  $\mathcal T$ which is given by
\begin{equation*}
	\mathcal I_{\mathcal T}(f)(x_i) = f(x_i) \quad \forall i=0:N 
\end{equation*}
and 
\begin{equation*}
    \left. \mathcal I_{\mathcal T}(f) \right|_{[x_{i-1}, x_{i}]} \text{ is a linear function} \quad \forall i=1:N.
\end{equation*}

\item More precisely, for any $x\in [0,1]$, we have
\begin{equation*}
	\mathcal I_{\mathcal T}(f)(x) = \sum_{i=0}^{N^2} f(x_i) b_i(x),
\end{equation*}
where
\begin{equation}\label{eq:bi}
	b_i(x) = \begin{cases}
		\frac{x - x_{i-1}}{x_i - x_{i-1}}, \quad &x \in [x_{i-1}, x_{i}], \\
		\frac{x - x_{i+1}}{x_i - x_{i+1}}, \quad &x \in [x_{i}, x_{i+1}], \\
		0, \quad &\text{others},
	\end{cases}
\end{equation}
is the nodal basis function at the node $x_i$. 
\end{itemize}
\end{notation}
 
\begin{properties}\label{pro:interpolation}
In the following, we introduce some basic properties of the CPwL interpolation function $\mathcal I_{\mathcal T}(f)$ and its error estimates.
\begin{itemize}
    \item If $f: [a,b] \mapsto \mathbb R$ is a continuous function, we have
    \begin{equation*}
        \|f - \mathcal I_{\mathcal T}(f)\|_{L^{\infty}([a,b])} \le 2\omega_f(r),
    \end{equation*}
    where $\mathcal T =\left\{a = x_0 < x_1<\cdots < x_{N-1} < x_N=b \right\}$ and $r = \max_{i} |x_{i} - x_{i-1}|$.
    \item If $f \in {\rm Lip}_C([a,b])$, we have
    \begin{equation*}
        \|f - \mathcal I_{\mathcal T}(f)\|_{L^{\infty}([a,b])} \le C\max_{i=1:N}|x_i - x_{i-1}|,
    \end{equation*}
    where $\mathcal T =\left\{a = x_0 < x_1<\cdots < x_{N-1} < x_N=b \right\}$.
    \item If $f: [a,b] \mapsto \mathbb R$ is a continuous function, we have
    \begin{equation*}
         \left\|\mathcal I_{\mathcal T}(f)\right\|_{L^\infty([a,b])} \le \|f\|_{L^{\infty}([a,b])},
    \end{equation*}
    for any  $\mathcal T =\left\{a = x_0 < x_1<\cdots < x_{N-1} < x_N=b \right\}$ and any $N$.
\end{itemize}
\end{properties}

\section{Expressive power of ReLU DNNs on $[0,1]$}
\label{sec:expres}
In this section, we prove that $\mathcal A_{P} \subset \mathcal N_{N}^L$ if $P = \mathcal O(N^2L)$, thereby providing a positive answer to Question~\ref{que:NL}. Our proof unfolds in three distinct stages.
We initiate by defining a ReLU neural network with a singular hidden layer that operates over a distinct subgrid $\mathbf N$ and subsequently delve into its inherent properties.
Using the insights from the specially designed one-hidden-layer ReLU NN, we provide a constructive proof to address Question~\ref{que:N2}, but specifically for the scenario when 
$L=2$.
In the final step, we generalize our findings to accommodate ReLU DNNs of any depth $L$ by reinterpreting ReLU DNNs with two hidden layers and a particular structure into ReLU DNNs with generic depth and width.

\subsection{A special class of ReLU NNs with one hidden layer}
For any grid points $\mathcal T:= \left\{ 0 = x_0 < x_1 < \cdots < x_{N^2-1} < x_{N^2} = 1 \right\}$ on $[0,1]$,
let us denote the specific subset of $\mathcal T$ as
\begin{equation*}
		\mathbf N := \{ x_{k} ~|~ k = jN-1, jN, \text{ or } jN+1 \}.
\end{equation*}
Then, we define the following set of one-hidden-layer ReLU NNs
\begin{equation*}
	\Sigma_{\mathbf N} :=\left\{ \phi(x) = \sum_{j=1}^{3N+1} w_j \sigma(x - \tilde x_j) + b ~ :~ w_j, b\in \mathbb{R} \right\},
\end{equation*}
where $\{ \widetilde x_j\}_{j=1}^{3N+1} = \mathbf N$ and $\widetilde x_j < \widetilde x_{j+1}$ follows the order in $\mathbf N$.
More precisely, we have
\begin{equation}\label{eq:def_xtilede}
\begin{split}
\widetilde x_1 &= x_0, ~ \widetilde x_2 = x_1, \\
&\vdots \\
\widetilde x_{3j} &= x_{jN-1}, ~ \widetilde x_{3j+1} = x_{jN}, ~ \widetilde x_{3j+2} = x_{jN+1}, \\
&\vdots \\
\widetilde x_{3N} &= x_{N^2-1}, ~ \widetilde x_{3N+1} = x_{N^2}.
\end{split}
\end{equation}

We first notice the following lemma about the expressive power of $\Sigma_{\mathbf N}$ on the grid produced by $\mathbf N$.
\begin{lemma}[\cite{he2020relu,shen2019nonlinear}]\label{lemm:phi0}
For arbitrary $y_j \in \mathbb{R}$ with $j=1:3N+1$, 
we can find $\phi \in 	\Sigma_{\mathbf N} $ such that
\begin{equation*}
	\phi(\widetilde{x}_j ) = y_j, \quad \forall ~ \widetilde{x}_j \in \mathbf N,
\end{equation*}
and $ \phi(x) $ is linear on ${[\widetilde x_{j}, \widetilde x_{j+1}]}$ for $j=1:3N$.	
That is, $\Sigma_{\mathbf N}$ can represent any CPwL interpolation functions on the grid $\mathbf N$.
\end{lemma}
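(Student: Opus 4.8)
The plan is to give a direct, constructive argument resting on the observation that every element of $\Sigma_{\mathbf N}$ is automatically a CPwL function whose only possible kinks are at the nodes $\widetilde x_j$; hence the linearity claim on each $[\widetilde x_j, \widetilde x_{j+1}]$ is immediate from the very definition of $\Sigma_{\mathbf N}$, and the entire content of the lemma reduces to matching the prescribed values $y_j$. Since $\phi(x) = \sum_{j=1}^{3N+1} w_j \sigma(x - \widetilde x_j) + b$ is piecewise linear with breakpoints only at the $\widetilde x_j$, it suffices to control its slope on each subinterval together with a single boundary value.

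First I would compute the slope of $\phi$ on each subinterval. Because $\sigma(x - \widetilde x_j)$ contributes slope $1$ precisely when $x > \widetilde x_j$ and slope $0$ otherwise, the slope of $\phi$ on $(\widetilde x_k, \widetilde x_{k+1})$ equals the partial sum $\sum_{j=1}^{k} w_j$. Matching $\phi$ to the piecewise linear interpolant of the data therefore amounts to prescribing the target slopes $s_k := (y_{k+1} - y_k)/(\widetilde x_{k+1} - \widetilde x_k)$ for $k = 1:3N$ and solving the lower-triangular system $\sum_{j=1}^{k} w_j = s_k$. This system is solvable by forward substitution: $w_1 = s_1$ and $w_k = s_k - s_{k-1}$ for $k = 2:3N$.

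Next I would fix the remaining degrees of freedom. The last node $\widetilde x_{3N+1} = x_{N^2} = 1$ lies at the right endpoint, so $\sigma(x - \widetilde x_{3N+1})$ vanishes identically on $[0,1]$ and $w_{3N+1}$ plays no role; I simply set $w_{3N+1} = 0$. To pin down the constant, note that $\widetilde x_1 = x_0 = 0$ is the smallest node, so every term $\sigma(\widetilde x_1 - \widetilde x_j)$ vanishes and $\phi(\widetilde x_1) = b$; hence I set $b = y_1$. A short telescoping induction then gives $\phi(\widetilde x_{m}) = \phi(\widetilde x_{m-1}) + s_{m-1}(\widetilde x_m - \widetilde x_{m-1}) = \phi(\widetilde x_{m-1}) + (y_m - y_{m-1})$, so that $\phi(\widetilde x_m) = y_m$ for every $m = 1:3N+1$, as required.

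There is no genuine obstacle, since this is the one-dimensional ReLU interpolation fact recalled from \cite{he2020relu,shen2019nonlinear}; the only points demanding care are bookkeeping ones. One must compute the slope on each interval by correctly counting the active ReLU terms, and handle the two boundary nodes $\widetilde x_1 = 0$ and $\widetilde x_{3N+1} = 1$ separately (the former furnishes the global linear part and fixes $b$, while the latter is inert on $[0,1]$). The triangular structure of the slope-to-weight map is precisely what guarantees existence of the weights without any invertibility computation.
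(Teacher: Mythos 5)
Your proof is correct: the slope-matching construction via the lower-triangular system $\sum_{j=1}^{k} w_j = s_k$, anchored by setting $b = y_1$ at the left endpoint $\widetilde x_1 = 0$ and noting that the term at $\widetilde x_{3N+1} = 1$ is inert, is exactly the standard one-dimensional ReLU interpolation argument. The paper itself offers no proof of this lemma---it is cited from \cite{he2020relu,shen2019nonlinear}---and your construction coincides with the argument used in those references, so nothing further is needed.
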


\begin{proposition}\label{prop:phi}
	Furthermore, we have the following properties of $\Sigma_{\mathbf N}$.
	\begin{enumerate}
		\item $\alpha \phi_1 + \beta \phi_2 \in \Sigma_{\mathbf N}$, for any $\phi_1, \phi_2 \in \Sigma_{\mathbf N}$ and $\alpha, \beta \in \mathbb{R}$.
		\item For any $2:N-1$ and $y_j > 0$ with $j=1:N$, there exists an unique $\phi(x) \in \Sigma_{\mathbf N}$ such that
		\begin{equation}\label{eq:phi-}
			\begin{cases}
				\phi(x_{jN}) = 0, \quad &\text{ for } j = 0:N, \\
				\phi(x_{(j-1)N+k-1}) = 0, \quad &\text{ for } j = 1:N, \\
				\phi(x_{(j-1)N+k}) = y_j, \quad &\text{ for } j = 1:N.
			\end{cases}
		\end{equation}
  In particular, for $k=N-1$, we have 
  \begin{equation*}
	\phi(x_{(j-1)N+1}) < 0, ~ \forall j = 1:N.
 \end{equation*}
 For $k=2:N-2$, we have
	\begin{equation*}
\phi(x_{(j-1)N+1}) < 0 ~ \text{ and } ~ \phi(x_{jN-1}) > 0, ~ \forall j = 1:N.
 	\end{equation*}
	\item For any $2:N-2$ and $y_j > 0$ with $j=1:N$, there exists an unique $\phi(x) \in \Sigma_{\mathbf N}$ such that
	\begin{equation}\label{eq:phi+}
		\begin{cases}
			\phi(x_{jN}) = 0, \quad &\text{ for } j = 0:N, \\
			\phi(x_{(j-1)N+k}) = y_j, \quad &\text{ for } j = 1:N, \\
			\phi(x_{(j-1)N+k+1}) = 0, \quad &\text{ for } j = 1:N.
		\end{cases}
	\end{equation}
	In particular, for $k=1$, we have 
         \begin{equation*}
	   \phi(x_{jN-1}) < 0, ~ \forall j = 1:N.
        \end{equation*}
        For $k=2:N-2$, we have
	\begin{equation*}
		\phi(x_{(j-1)N+1}) > 0 ~ \text{ and } ~ \phi(x_{jN-1}) < 0, ~ \forall j = 1:N.
	\end{equation*}
	\end{enumerate}
\end{proposition}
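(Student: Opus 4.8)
The plan is to lean on two structural facts about $\Sigma_{\mathbf N}$. First, $\Sigma_{\mathbf N}$ is a linear space: since every member is written as $\sum_{j=1}^{3N+1} w_j\sigma(x-\widetilde x_j)+b$, a real linear combination $\alpha\phi_1+\beta\phi_2$ again has this form with coefficients $\alpha w_j^{(1)}+\beta w_j^{(2)}$ and bias $\alpha b^{(1)}+\beta b^{(2)}$, which settles item~1 at once. Second, every $\phi\in\Sigma_{\mathbf N}$ is affine on each \emph{middle block} $[x_{(j-1)N+1},x_{jN-1}]$: inside the $j$-th coarse interval $[x_{(j-1)N},x_{jN}]$ the only points of $\mathbf N$ are $x_{(j-1)N},x_{(j-1)N+1},x_{jN-1},x_{jN}$, so $\phi$ carries no breakpoint strictly between $x_{(j-1)N+1}$ and $x_{jN-1}$ and is a single linear piece there. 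These two observations are what I would set up first.

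For items~2 and 3 I would reduce the prescribed conditions to nodal values on $\mathbf N$ and then invoke Lemma~\ref{lemm:phi0}. The set $\mathbf N$ consists of exactly $3N+1$ points, namely the coarse nodes $x_{jN}$ ($j=0{:}N$), the left inner breakpoints $x_{(j-1)N+1}$, and the right inner breakpoints $x_{jN-1}$ ($j=1{:}N$); by Lemma~\ref{lemm:phi0} prescribing $\phi$ at these $3N+1$ points yields a unique element of $\Sigma_{\mathbf N}$ (uniqueness because a CPwL function on a fixed grid is determined by its nodal values). The subtlety is that the interpolation nodes appearing in \eqref{eq:phi-}--\eqref{eq:phi+} are \emph{not} all in $\mathbf N$, so Lemma~\ref{lemm:phi0} cannot be applied verbatim. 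Instead I would argue block by block: on the $j$-th middle block $\phi$ equals one affine function $\ell_j$, and the two conditions there (at $x_{(j-1)N+k-1},x_{(j-1)N+k}$ for \eqref{eq:phi-}, at $x_{(j-1)N+k},x_{(j-1)N+k+1}$ for \eqref{eq:phi+}) are evaluations of $\ell_j$ at two \emph{distinct} points. Since two distinct evaluations pin down an affine function, $\ell_j$ — hence its endpoint values $\phi(x_{(j-1)N+1})$ and $\phi(x_{jN-1})$ — is uniquely determined; together with $\phi(x_{jN})=0$ this assigns all $3N+1$ nodal values, and Lemma~\ref{lemm:phi0} returns the unique $\phi$, which satisfies the original conditions by construction. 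The admissible range $2\le k\le N-1$ (resp.\ $1\le k\le N-2$) is precisely what forces both prescribed nodes into the middle block $[x_{(j-1)N+1},x_{jN-1}]$, which needs $N\ge 3$.

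The sign statements I would then read off from the monotonicity of $\ell_j$. In \eqref{eq:phi-} the block interpolant is $0$ at $x_{(j-1)N+k-1}$ and equals $y_j>0$ at the next node $x_{(j-1)N+k}$, so $\ell_j$ is strictly increasing; evaluating this increasing line at the left endpoint $x_{(j-1)N+1}$ (which lies at or to the left of the zero node) gives $\phi(x_{(j-1)N+1})\le 0$, strictly negative once the left endpoint is genuinely left of the zero node, while at the right endpoint $x_{jN-1}$ it gives $\phi(x_{jN-1})>0$; the analysis for \eqref{eq:phi+} is the mirror image with a strictly decreasing $\ell_j$. I expect the main obstacle to be the bookkeeping rather than any deep step: keeping the index translation $x_{(j-1)N+m}\leftrightarrow$ position $m$ within a block consistent, correctly sorting which grid points belong to $\mathbf N$ versus the block interiors, and handling the two extreme values of $k$ at which one endpoint of the block coincides with a prescribed node, so that the corresponding strict inequality degenerates to an equality ($\phi=0$ there, which is in any case consistent with the prescribed data). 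Once the block-wise affine reduction is in place, both the existence--uniqueness claim and the sign estimates follow from elementary one-dimensional linear interpolation.
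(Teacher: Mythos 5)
Your proposal is correct and follows essentially the same route as the paper: item 1 from linearity of the parametrization, and for items 2--3 the paper likewise constructs the affine function $\ell_j$ through the two prescribed points on each middle block $[x_{(j-1)N+1}, x_{jN-1}]$, evaluates it at the block endpoints to fix all $3N+1$ nodal values on $\mathbf N$, invokes Lemma~\ref{lemm:phi0} for existence and uniqueness, and reads the sign statements off the monotonicity of $\ell_j$. Your remark about the degenerate boundary cases (e.g.\ $k=2$ in \eqref{eq:phi-}, where $\phi(x_{(j-1)N+1})=0$ rather than $<0$) is, if anything, more careful than the paper's statement, whose strict inequalities silently fail at those extreme values of $k$.
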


\begin{proof}
The first proposition is true because of the definition of $\Sigma_{\mathbf N}$. 

For the second proposition, let us first prove it when $k=N-1$. For better understanding, one may refer to the diagram of $\phi_{N-1}(x)$ in Fig.~\ref{fig:phi01N-1}.
For this case, we only need to determine the value of $\phi(x_{(j-1)N+1})$ for all $j=1:N$.  First, we denote $\ell^{N-1}_j(x)$ as the linear function on $[x_{(j-1)N+1}, x_{jN-1}]$ which passes $(x_{jN-2}, 0)$ and $(x_{jN-1}, y_j )$ for all $j=1:N$.  Then, we define $\phi(x)$ by taking 
\begin{equation*}
	\begin{cases}
		\phi(x_{jN - 1}) = y_j, \quad &\text{ for } j = 1:N, \\
		\phi(x_{jN}) = 0, \quad &\text{ for } j = 0:N, \\
		\phi(x_{(j-1))N + 1}) =  \ell^{N-1}_j(x_{(j-1)N+1}), \quad &\text{ for } j = 1:N.
	\end{cases}
\end{equation*}
Therefore, we have
\begin{equation*}
	\phi(x) = \ell^{N-1}_j(x), \quad \forall x \in [x_{(j-1)N+1}, x_{jN-1}]
\end{equation*}
for all $j=1:N$. 
Then, we have
\begin{equation*}
	\phi(x_{jN-2}) =  \ell^{N-1}_j(x_{jN-2}) = 0
\end{equation*}
and 
\begin{equation*}
	\phi(x_{(j-1)N+1}) =  \ell^{N-1}_j(x_{(j-1)N+1}) < 0,
\end{equation*}
since $\ell^{N-1}_j(x_{jN-1}) =  y_j >0$. This finishes the proof for $k=N-1$.

For the second proposition with general $k=2:N-2$, one may refer to the diagram of $\phi_k^-(x)$ in Fig.~\ref{fig:phi_k-+}. 
For this case, we need to determine $\phi(x_{jN - 1})$ and $\phi(x_{(j-1))N + 1})$ for all $j=1:N$. 
First, we denote $\ell^{k-}_j(x)$ as the function on $[x_{(j-1)N+1}, x_{jN-1}]$ which passes $(x_{(j-1)N+k-1}, 0)$ and $(x_{(j-1)N+k}, y_j )$ for all $j=1:N$.  Then, we define $\phi(x)$ by taking 
\begin{equation*}
	\begin{cases}
		\phi(x_{jN - 1}) = \ell^{k-}_j(x_{jN-1}), \quad &\text{ for } j = 1:N, \\
		\phi(x_{jN}) = 0, \quad &\text{ for } j = 0:N, \\
		\phi(x_{(j-1))N + 1}) =  \ell^{k-}_j(x_{(j-1)N+1}), \quad &\text{ for } j = 1:N.
	\end{cases}
\end{equation*}
Thus, we have
\begin{equation*}
	\phi(x) = \ell^{k-}_j(x), \quad \forall x \in [x_{(j-1)N+1}, x_{jN-1}]
\end{equation*}
for all $j=1:N$. So, it follows that
\begin{equation*}
	\begin{cases}
	\phi(x_{(j-1)N+k-1}) =  \ell^{N-1}_j(x_{(j-1)N+k-1}) = 0, \\
	\phi(x_{(j-1)N+k}) = \ell^{N-1}_j(x_{(j-1)N+k}) =  y_j.
	\end{cases}
\end{equation*}
In addition, we have
$$
\phi(x_{(j-1)N+1}) =  \ell^{N-1}_j(x_{(j-1)N+1}) < 0
$$
and
$$
\phi(x_{jN-1}) = \ell^{N-1}_j(x_{jN-1}) > 0
$$
for all $j=1:N$, since $ \ell^{N-1}_j((j-1)N+k) =  y_j >0$ . This finishes the proof for $k=2:N-2$.

For the third proposition, we can prove it in a similar way by constructing $\ell^{1}_j(x)$ and $\ell^{k+}_j(x)$ for $k=2:N-2$. One may also refer to the diagram of $\phi_1(x)$ in Fig.~\ref{fig:phi01N-1} and
the diagram of $\phi_k^{+}(x)$ in Fig.~\ref{fig:phi_k-+} in Appendix~\ref{sec:appendix}.
\end{proof}

\subsection{Expressive power of ReLU NNs with two hidden layers}
Having established these preliminary results $\Sigma_{\mathbf N}$, we now present our main theorem as follows.
\begin{theorem}\label{thm:interpolation}For any continuous function $f(x)$ and grid points $\mathcal T: = \{ 0 = x_0 < x_1 < \cdots < x_{N^2-1} < x_{N^2} = 1\}$ on $[0,1]$, there exists a ReLU DNN function with two hidden layers, specifically $\widetilde f(x) \in \mathcal N_{(3N+1,3(N-2))}$, such that $\widetilde f = \mathcal I_{\mathcal T}(f)$ on $[0,1]$.
\end{theorem}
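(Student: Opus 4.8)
The plan is to reduce the claim to an interpolation-and-linearity statement and then to build $\widetilde f$ explicitly from the localized functions supplied by Proposition~\ref{prop:phi}. Since a continuous piecewise linear function on $[0,1]$ whose breakpoints lie in $\mathcal T$ and which agrees with $f$ at every node of $\mathcal T$ must coincide with $\mathcal I_{\mathcal T}(f)$, it suffices to produce $\widetilde f \in \mathcal N_{(3N+1,3(N-2))}$ that (i) is linear on each subinterval $[x_{m-1},x_m]$ and (ii) satisfies $\widetilde f(x_m)=f(x_m)$ for all $m=0:N^2$. I would organize $\mathcal T$ into the $N$ blocks $[x_{(j-1)N},x_{jN}]$, $j=1:N$, and label the nodes inside block $j$ by their position $k=0:N$. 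The nodes at positions $k\in\{0,1,N-1,N\}$ lie in $\mathbf N$, while the ``fine'' nodes at positions $k=2:N-2$ are precisely the ones that $\Sigma_{\mathbf N}$ cannot resolve by a single affine piece.

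The first hidden layer will realize the $3N+1$ pre-activations $\sigma(x-\widetilde x_j)$, so that every second-layer neuron has the form $\sigma(\phi)$ with $\phi\in\Sigma_{\mathbf N}$ (using Proposition~\ref{prop:phi}.1), and the network output is a linear combination of such neurons plus a constant. The mechanism I would exploit is that any $\phi\in\Sigma_{\mathbf N}$ is a single affine piece on the long interval $[x_{(j-1)N+1},x_{jN-1}]$ of each block, so after applying $\sigma$ its only new breakpoint there is its zero-crossing. Proposition~\ref{prop:phi} constructs functions $\phi_k^{-},\phi_k^{+}$ whose zero-crossings fall simultaneously on a prescribed fine node of \emph{every} block while attaining prescribed positive heights $y_j$ at the neighbouring fine node. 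This is the key point: one second-layer neuron installs the missing breakpoint at a single fixed position across all $N$ blocks at once, and the free parameters $\{y_j\}_{j=1}^N$ let me tune the slope jump block-by-block.

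Concretely, I would march through the fine positions $k=2:N-2$ and, for each, combine the building blocks $\sigma(\phi_k^{-})$ and $\sigma(\phi_k^{+})$ (together with the boundary cases $\phi_1,\phi_{N-1}$, which close off the construction at the positions adjacent to $\mathbf N$) to produce a localized correction supported on the intended fine node of every block, of the height demanded by $f$. The sign information in Proposition~\ref{prop:phi} — $\phi_k^-(x_{(j-1)N+1})<0$ and $\phi_k^+(x_{jN-1})<0$, together with $\phi(x_{jN})=0$ at all block boundaries — is what guarantees that each building block vanishes at every coarse node $x_{jN}$ and, after the ReLU clip, is active only on its intended sub-block, so the contributions localize and do not leak across blocks. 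Summing these pieces with coefficients dictated by the values $f(x_{(j-1)N+k})$ reproduces the fine part of $\mathcal I_{\mathcal T}(f)$, while the heights $\{y_j\}$ and the final readout coefficients are chosen so that the residual coarse behaviour at the $\mathbf N$-nodes matches $f$ as well; an accounting of the positions then shows the whole thing fits into the width $3(N-2)$.

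The hard part will be the \emph{exact} reconstruction within the prescribed width, rather than a mere approximation. Because the output reads only from the second hidden layer, each building block $\sigma(\phi)$ carries, besides its useful slope-step, a global tail on the remainder of every block and nonzero values at the $\mathbf N$-nodes; I must show that these tails and coarse values cancel precisely when the blocks are combined, leaving exactly $\mathcal I_{\mathcal T}(f)$ with no residual error and no spurious breakpoints off $\mathcal T$. Verifying that the $N(N-3)$ fine interpolation conditions together with the coarse conditions are met using exactly $3(N-2)$ second-layer neurons — i.e., that the localized pieces of Proposition~\ref{prop:phi} remain mutually consistent across neighbouring positions $k$ and $k\pm1$ and across adjacent blocks — is where the symmetric, position-by-position bookkeeping has to be carried out with care.
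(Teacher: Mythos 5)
Your setup coincides with the paper's proof: the same first hidden layer realizing $\Sigma_{\mathbf N}$, the same block structure, and the same building blocks $\phi_1,\phi_{N-1},\phi_k^{\pm}$ from Proposition~\ref{prop:phi}, exploited through the same localization mechanism (each $\phi\in\Sigma_{\mathbf N}$ is a single affine piece on $[x_{(j-1)N+1},x_{jN-1}]$, so the ReLU adds only the zero-crossing as a new breakpoint). However, there is a genuine gap exactly where you flag ``the hard part'': you never exhibit the combination that makes the tails cancel, and the one you do propose---two neurons $\sigma(\phi_k^{-})$ and $\sigma(\phi_k^{+})$ per fine position $k$---cannot work. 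To see why, note that on $[x_{(j-1)N+1},x_{(j-1)N+k-1}]$ we have $\phi_k^{+}>0$ and $\phi_k^{-}\le 0$ (by the sign properties in Proposition~\ref{prop:phi}), so any readout $a\,\sigma(\phi_k^{+})+b\,\sigma(-\phi_k^{-})$ equals the affine function $a\,\phi_k^{+}-b\,\phi_k^{-}$ there, while the target hat $f(x_{jN+k})\,b_{jN+k}(x)$ vanishes identically on that interval. For $k\ge 3$ this forces $a\,\phi_k^{+}\equiv b\,\phi_k^{-}$ on the whole long interval; evaluating at $x_{(j-1)N+k}$ gives $(a-b)\,y_j=0$, hence $a=b$, and then $a(\phi_k^{+}-\phi_k^{-})\equiv 0$, which is false unless $a=0$ since $\phi_k^{+}-\phi_k^{-}$ is a nontrivial decreasing line---so the hat height at position $k$ cannot be produced. (Using $\sigma(\phi_k^{-})$ instead of $\sigma(-\phi_k^{-})$ fails even faster.) The paper's resolution, and the heart of its proof, is a \emph{third} neuron per position: with $\phi_k=\phi_k^{+}-\phi_k^{-}\in\Sigma_{\mathbf N}$ one takes
\begin{equation*}
\varphi_k=\sigma(\phi_k^{+})-\sigma(\phi_k)+\sigma(-\phi_k^{-}),
\end{equation*}
and a subinterval-by-subinterval check shows the three clipped pieces cancel identically away from $[x_{jN+k-1},x_{jN+k+1}]$, yielding exactly $\varphi_k=\sum_{j=0}^{N-1}f(x_{jN+k})\,b_{jN+k}(x)$. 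This also explains the width budget: $3$ neurons for each of the $N-3$ interior positions plus $3$ boundary neurons gives $3(N-2)$, whereas your two-per-position accounting would leave the width unexplained.

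Two smaller omissions. First, you never introduce the coarse interpolant $\phi_0$ (a single neuron handling all nodes $x_{jN}$ simultaneously, namely $\phi_0=\sum_j f(x_{jN})b_{jN}$); the paper matches $f$ at the $\mathbf N$-nodes this way, not by tuning the heights $y_j$ or the readout coefficients as you suggest. Second, Proposition~\ref{prop:phi} requires $y_j>0$, while the theorem allows $f$ to take nonpositive values; the paper handles this by assuming $f>0$ without loss of generality and absorbing the shift $1-\inf_{x\in[0,1]}f(x)$ into the output bias $b^{3}$---this normalization (which also guarantees $\sigma(\phi_0)=\phi_0$) is a step your argument would need as well.
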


\begin{proof}
Since $\widetilde f(x) \in \mathcal N_{(3N+1,3(N-2))}$, we can write it as
\begin{equation}\label{eq:tildef}
	\widetilde f(x) = \sum^{3(N-2)}_{i=1} W^{3}_i \sigma \left( \sum_{j=1}^{3N+1}W^{2}_{ij} \sigma \left( W^{1}_j x + b^{1}_j \right) + b^{2}_i \right) + b^{3}.
	\end{equation}
Without loss of generality, let us assume that $f(x) > 0$ for all $x\in [0,1]$. Otherwise, we can choose $b^{3} = 1- \inf_{x\in[0,1]} f(x)$ in \eqref{eq:tildef}.  Then, we split the proof into the following steps.

\paragraph{Construction of the first layer}
First, let $W^{1}_j =1 $ and $b^{1}_j = -\widetilde x_j$ where $\widetilde{x}_j$ are defined in \eqref{eq:def_xtilede}. 
Then, for each $1\le i \le 3(N-2)$, the input of the $i$-th neuron for the second hidden layer is
\begin{equation*}
\phi_i(x) = W^{2}_{ij} \sigma \left( W^{1}_j x + b^{1}_j \right) + b^{2}_i \in \Sigma_{\mathbf N}.
\end{equation*}
Given the Proposition~\ref{prop:phi}, let us show how to construct these $\phi_i(x) \in \Sigma_{\mathbf N} $ for $i = 1:3(N-2)$. For simplicity, we adopt the following notation 
\begin{equation*}
	\phi_0(x), \phi_1(x), \cdots, \phi^{-}_k(x), \phi_k(x), \phi^+_k(x), \cdots, \phi_{N-1}(x),
\end{equation*}
for $k=2:N-2$ to replace $\{\phi_i(x)\}_{i=1}^{3(N-2)}$.

\paragraph{Construction of $\phi_0(x), \phi_1(x)$, and $\phi_{N-1}(x)$}
From Lemma~\ref{lemm:phi0}, we can find $\phi_0(x) \in \Sigma_{\mathbf N}$ such that
\begin{equation}\label{eq:phi0}
\begin{cases}
\phi_{0}(x_{jN - 1}) = 0, \quad &\text{ for } j = 1:N, \\
\phi_{0}(x_{jN}) = f(x_{jN}), \quad &\text{ for } j = 0:N, \\
\phi_{0}(x_{(j-1))N + 1}) = 0, \quad &\text{ for } j = 1:N.
\end{cases}
\end{equation}
Namely, we have
\begin{equation*}
	\phi_0(x) = \sum_{j=0}^N f(x_{jN}) b_{jN}(x),
\end{equation*}
where $\phi_{i}(x)$ is the nodal basis function at $x_i$ regarding the mesh $\mathcal T$ as defined in \eqref{eq:bi} in Notation~\ref{not:N}. 

According to the third proposition in Proposition~\ref{prop:phi} with $k=1$, we can find 
$\phi_1(x) \in \Sigma_{\mathbf N}$ such that
\begin{equation}\label{eq:phi1}
	\begin{cases}
		\phi_{1}(x_{jN}) = 0, \quad &\text{ for } j = 0:N, \\
		\phi_{1}(x_{(j-1)N+1}) = f(x_{(j-1)N+1}), \quad &\text{ for } j = 1:N, \\
		\phi_{1}(x_{(j-1)N+2}) = 0, \quad &\text{ for } j = 1:N,
	\end{cases}
\end{equation}
and
\begin{equation}\label{eq:phi1_ineq}
	\phi_{1}(x_{jN-1}) < 0, ~\forall j=1:N.
\end{equation}
According to the second proposition in Proposition~\ref{prop:phi} with $k=N-1$, we can find 
$\phi_{N-1}(x) \in \Sigma_{\mathbf N}$ such that
\begin{equation}\label{eq:phiN-1}
	\begin{cases}
		\phi_{N-1}(x_{jN-2}) = 0, \quad &\text{ for } j = 1:N, \\
		\phi_{N-1}(x_{jN-1}) = f(x_{jN-1}), \quad &\text{ for } j = 1:N, \\
		\phi_{N-1}(x_{jN}) = 0, \quad &\text{ for } j = 0:N,
	\end{cases}
\end{equation}
and
\begin{equation}\label{eq:phiN-1_ineq}
	\phi_{1}(x_{jN+1}) < 0, ~\forall j=0:N-1.
\end{equation}
One may refer to the diagrams in Fig.~\ref{fig:phi01N-1} for $\phi_0(x), \phi_1(x)$, and $\phi_{N-1}(x)$.

Now, we introduce the following construction of $\phi^{-}_k(x), \phi^{+}_k(x)$, and $\phi_k(x)$ for $k=2:N-2$.

\paragraph{Construction of $\phi^{-}_k(x), \phi^{+}_k(x)$, and $\phi_k(x)$ for $k=2:N-2$}
Given the second proposition in Proposition~\ref{prop:phi} for $2 \le k \le N-2$, we can find unique $\phi_{k}^-(x) \in \Sigma_{\mathbf N}$ such that
\begin{equation}\label{eq:phik-}
	\begin{cases}
		\phi^-_{k}(x_{jN}) = 0, \quad &\text{ for } j = 0:N, \\
		\phi^-_{k}(x_{(j-1)N+k-1}) = 0, \quad &\text{ for } j = 1:N, \\
		\phi^-_{k}(x_{(j-1)N+k}) = f(x_{(j-1)N+k}), \quad &\text{ for } j = 1:N
	\end{cases}
\end{equation}
and
\begin{equation}\label{eq:phik-_ineq}
	\phi^-_k(x_{(j-1)N+1}) < 0 ~ \text{ and } ~ \phi^-_k(x_{jN-1}) > 0 ~ \forall j =1:N.
\end{equation}
Correspondingly, we can find unique $\phi_{k}^+(x) \in \Sigma_{\mathbf N}$ based on the third proposition in Proposition~\ref{prop:phi} for $2 \le k \le N-2$ such that
\begin{equation}\label{eq:phik+}
	\begin{cases}
		\phi^+_{k}(x_{jN}) = 0, \quad &\text{ for } j = 0:N, \\
		\phi^+_{k}(x_{(j-1)N+k}) = f(x_{(j-1)N+k}), \quad &\text{ for } j = 1:N, \\
		\phi^+_{k}(x_{(j-1)N+k+1}) = 0, \quad &\text{ for } j = 1:N,
	\end{cases}
\end{equation}
and
\begin{equation}\label{eq:phik+_ineq}
	\phi^+_k(x_{(j-1)N+1}) > 0  \text{ and } \phi^+_k(x_{jN-1}) < 0, ~ \forall j = 1:N.
\end{equation}
Finally, we construct 
\begin{equation}\label{eq:phik}
	\phi_k = \phi_k^+ - \phi_k^- \in \Sigma_{\mathbf N},
\end{equation}
based on the first proposition in Proposition~\ref{prop:phi}.

\paragraph{Construction of $\widetilde f(x)$}
We claim that the following construction 
\begin{equation*}
	\widetilde f(x) = \sum_{k=0}^{N-1} \varphi _k(x)
\end{equation*}
satisfies the requirements in Theorem~\ref{thm:interpolation}, i.e, $\widetilde f = \mathcal I_{\mathcal T}(f)$, where
\begin{equation*}
    \varphi _0 = \sigma(\phi_0), \quad \varphi _1 = \sigma(\phi_1), \quad \text{and} \quad \varphi _{N-1} = \sigma(\phi_{N-1})
\end{equation*}
and
\begin{equation*}
	\varphi_k  = \sigma(\phi_k^+) -\sigma(\phi_k) +\sigma(-\phi_k^-), \quad k=2:N-2.
\end{equation*}
We claim that
\begin{equation}\label{eq:varphik_interp}
	\varphi_{k}(x) = \sum_{j=0}^{N-1} f(x_{jN+k}) b_{jN+k}(x)
\end{equation}
for all $k=0:N-1$. For a graphical representation and better insight into $\varphi_{k}(x)$, readers are directed to Fig.\ref{fig:varphi_k} in Appendix\ref{sec:appendix}. The proof will be complete once we validate the identity in~\eqref{eq:varphik_interp} for all $k=0:N-1$.

\paragraph{Interpretation and properties of $\varphi_k(x)$ for $k=0, 1$, and $N-1$}
For $\varphi _0$, we have
\begin{equation*}
\begin{split}
\varphi _0(x) &= \sigma(\phi_0(x)) = \sigma\left( \sum_{j=0}^N f(x_{jN})b_{jN}(x)\right) \\
&= \sum_{j=0}^N f(x_{jN})b_{jN}(x), ~~\forall x \in [0,1],
\end{split}
\end{equation*}
since $f(x_{jN}) > 0$ and $b_{jN}(x) \ge 0$ for any $j=0:N$ on $[0,1]$.

We then check $\varphi_1(x)$. According to Lemma~\ref{lemm:phi0} the properties of $\phi_1(x)$ in \eqref{eq:phi1} and \eqref{eq:phi1_ineq}, we have
\begin{equation*}
\phi_{1}(x) = f(x_{jN+1})b_{jN+1}(x), \quad \forall x\in [x_{jN}, x_{jN+2}]
\end{equation*}
and
\begin{equation*}
	\phi_{1}(x) \le 0, \quad \forall x\in [x_{jN+2}, x_{(j+1)N}]
\end{equation*}
for $j=0:N-1$. This shows that
\begin{equation*}
\begin{split}
	\varphi_{1}(x) &= \sigma\left( \phi_{1}(x) \right) = \phi_{1}(x) \\
 &= f(x_{jN+1})b_{jN+1}(x), \quad \forall x \in [x_{jN}, x_{jN+2}]
\end{split}
\end{equation*}
and
\begin{equation*}
	\varphi_{1}(x) = \sigma\left(\phi_1(x) \right) =  0, \quad \forall x\in [x_{jN+2}, x_{(j+1)N}]
\end{equation*}
for $j=0:N-1$, which completes that proof of \eqref{eq:varphik_interp} for $k=1$. 
In addition, $\varphi_{N-1}(x)$ can also be done in a similar way to $\varphi_1(x)$. 

\paragraph{Interpretation and Properties of $\varphi_k(x)$ for $k=2:N-2$}
Now, we only need to check \eqref{eq:varphik_interp} for $k=2:N-2$. 
For these cases, we first have the following properties of $\phi_k^-(x)$, $\phi_k^+(x)$, and $\phi^+_k - \phi_k^-$.
\begin{itemize}
	\item For $\phi_k^-(x)$, we have 
	\begin{enumerate}
		\item $\phi_k^-(x)$ is linear and $\phi_k^-(x)\le 0$ on $[x_{jN}, x_{jN+1}]$ for $j=0:N-1$,
		\item $\phi_k^-(x)$ is linear on $[x_{jN+1}, x_{(j+1)N-1}]$, where
		$\phi_k^-(x) \le 0$ on $[x_{jN+1}, x_{jN+k-1}]$ and $ \phi_k^-(x) \ge 0$ on $[x_{jN+k-1}, x_{(j+1)N-1}]$ for all $j=0:N-1$, 
		\item $\phi_k^-(x)$ is linear and $\phi_k^-(x)\ge 0$ on $[x_{jN-1}, x_{jN}]$ for $j=1:N$.
	\end{enumerate}
	\item For $\phi_k^+(x)$, we have 
	\begin{enumerate}
		\item $\phi_k^+(x)$ is linear and $\phi_k^+(x)\ge 0$ on $[x_{jN}, x_{jN+1}]$ for $j=0:N-1$,
		\item $\phi_k^+(x)$ is linear on $[x_{jN+1}, x_{(j+1)N-1}]$, where
		$\phi_k^+(x) \ge 0$ on $[x_{jN+1}, x_{jN+k+1}]$ and $ \phi_k^+(x) \le 0$ on $[x_{jN+k+1}, x_{(j+1)N-1}]$ for all $j=0:N-1$, 
		\item $\phi_k^+(x)$ is linear and $\phi_k^+(x)\le 0$ on $[x_{jN-1}, x_{jN}]$ for $j=1:N$.
	\end{enumerate}	
	\item For $\phi_k(x) = \phi_k^+(x) - \phi_k^-(x)$, we have 
	\begin{enumerate}
		\item $\phi_k(x)$ is linear and $\phi_k(x) \ge 0$ on $[x_{jN}, x_{jN+1}]$ for $j=0:N-1$,
		\item $\phi_k(x)$ is linear on $[x_{jN+1}, x_{(j+1)N-1}]$, where
		$\phi_k(x) \ge 0$ on $[x_{jN+1}, x_{jN+k}]$ and $\phi_k(x) \le 0$ on $[x_{jN+k}, x_{(j+1)N-1}]$ for all $j=0:N-1$, 
		\item $\phi_k(x)$ is linear and $\phi_k(x)\le 0$ on $[x_{jN-1}, x_{jN}]$ for $j=1:N$.
	\end{enumerate}	
\end{itemize}
Because of the definition of activation function $\sigma(x) = {\rm ReLU}(x) := \max \{ 0, x\}$, we know that $\varphi_k(x) =  \sigma(\phi_k^+) -\sigma(\phi_k) +\sigma(-\phi_k^-)$
is a piecewise linear function on $[x_{jN}, x_{jN+1}]$, $[x_{jN+1}, x_{jN+k-1}]$, $[x_{jN+k-1}, x_{jN+k}]$, $[x_{jN+k}, x_{jN+k+1}]$,  $[x_{jN+k+1}, x_{(j+1)N-1}]$, and $[x_{(j+1)N-1}, x_{(j+1)N}]$ for all $j=0:N-1$.
That is, we can determine $\varphi_k(x)$ by evaluating it at $x_{jN}$ for all $j=0:N$, and $x_{jN+1}$, $x_{jN+k-1}$, $x_{jN+k}$, $x_{jN+k+1}$, $x_{(j+1)N-1}$ for all $j=0:N-1$.
Then, we have
\begin{itemize}
	\item $\varphi_k(x_{jN}) = \sigma\left(\phi_k^+(x_{jN})\right) -\sigma\left(\phi_k(x_{jN})\right) +\sigma\left(-\phi_k^-(x_{jN})\right) = 0-0+0 = 0$ for all $j=0:N$, since $\phi_k^+(x_{jN}) = \phi_k^-(x_{jN}) = 0$,
	\item $\varphi_k(x_{jN+1}) = \sigma\left(\phi_k^+(x_{jN+1})\right) -\sigma\left(\phi_k(x_{jN+1})\right) +\sigma\left(-\phi_k^-(x_{jN+1})\right) = \phi_k^+(x_{jN+1})-\phi_k(x_{jN+1})+\left(-\phi_k^-(x_{jN+1})\right) = 0$ for all $j=0:N-1$, since $\phi_k^+(x_{jN+1})\ge 0$ and $\phi_k^-(x_{jN+1})\le 0$,
	\item $\varphi_k(x_{jN+k-1}) = \sigma\left(\phi_k^+(x_{jN+k-1})\right) -\sigma\left(\phi_k(x_{jN+k-1})\right) +\sigma\left(-\phi_k^-(x_{jN+k-1})\right) = \phi_k^+(x_{jN+k-1})-\phi_k(x_{jN+k-1})+0 = 0$ for all $j=0:N-1$, since $\phi_k^+(x_{jN+k-1})\ge 0$ and $\phi_k^-(x_{jN+k-1})= 0$,
	\item $\varphi_k(x_{jN+k}) = \sigma\left(\phi_k^+(x_{jN+k})\right) -\sigma\left(\phi_k(x_{jN+k})\right) +\sigma\left(-\phi_k^-(x_{jN+k})\right) = \phi_k^+(x_{jN+k-1})-0+0 = f(x_{jN+k})$ for all $j=0:N-1$, since $\phi_k^+(x_{jN+k}) = \phi_k^-(x_{jN+k}) = f(x_{jN+k}) > 0$,
	\item $\varphi_k(x_{jN+k+1}) = \sigma\left(\phi_k^+(x_{jN+k+1})\right) -\sigma\left(\phi_k(x_{jN+k+1})\right) +\sigma\left(-\phi_k^-(x_{jN+k+1})\right) = 0-\left( - \phi_k^-(x_{jN+k+1})\right)+\left( - \phi_k^-(x_{jN+k+1})\right) = 0$ for all $j=0:N-1$, since $\phi_k^+(x_{jN+k+1}) = 0$ and $\phi_k^-(x_{jN+k+1}) \le 0$,
	\item $\varphi_k(x_{(j+1)N-1}) = \sigma\left(\phi_k^+(x_{(j+1)N-1})\right) -\sigma\left(\phi_k(x_{(j+1)N-1})\right) +\sigma\left(-\phi_k^-(x_{(j+1)N-1})\right) = 0-0+0= 0$ for all $j=0:N-1$, since $\phi_k^+(x_{(j+1)N-1}) \le 0$ and $\phi_k^-(x_{(j+1)N-1}) \ge 0$.
\end{itemize}
In summary, we have $\varphi_k(x_{jN+k}) = f(x_{jN+k})$ and $\varphi_k(x) = 0$ on $[x_{jN}, x_{jN+1}]$, $[x_{jN+1}, x_{jN+k-1}]$, $[x_{jN+k+1}, x_{(j+1)N-1}]$, and $[x_{(j+1)N-1}, x_{(j+1)N}]$ for all $j=0:N-1$.
This shows that 
\begin{equation*}
	\varphi_k(x) = \sum_{j=0}^{N-1} f(x_{jN+k}) b_{jN+k}(x),
\end{equation*}
for all $k=2:N-2$. In the end, we complete the proof.
\end{proof}

\subsection{From two hidden layers to $L$ hidden layers}
In short, Theorem~\ref{thm:interpolation} shows that
\begin{equation*}
  \mathcal  A_{N^2} \subset \mathcal N_{(3N+1, 3(N-2)} \subset \mathcal N_{3N+1, 3N}.
\end{equation*}
Since the index $k$ in the construction of $\varphi_{k}$ by using $\phi_k^-$, $\phi_k^+$, and $\phi_k$ can be arbitrary, we can easily generalize the original version of Theorem~\ref{thm:interpolation} to ReLU DNNs with two hidden layers but different width per layer.
\begin{corollary}\label{cor:AN2}
    On $[0,1]$, we have
    \begin{equation*}
      \mathcal  A_{NM} \subset \mathcal N_{(3N+1, 3(M-2)} \subset \mathcal N_{(3N+1, 3M)}.
    \end{equation*}
\end{corollary}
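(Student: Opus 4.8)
The plan is to recognize that in the proof of Theorem~\ref{thm:interpolation} the integer $N$ is used in two logically independent roles, and that decoupling them immediately yields the corollary. The first role is as the number of blocks into which the grid is partitioned by the boundary nodes $x_{jN}$: this is what fixes the cardinality $|\mathbf N|=3N+1$ of the boundary subgrid, and hence the width of the first hidden layer. The second role is as the common block size, i.e.\ the number of interior positions $k$ within each block: this is what fixes the range of the level index $k$, and hence the width $3(N-2)$ of the second hidden layer. Since the two widths are governed by two different occurrences of $N$, I would simply let the number of blocks be $N$ and the block size be $M$, producing $NM$ segments with first-layer width $3N+1$ and second-layer width $3(M-2)$.

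Concretely, I would repeat the construction of Theorem~\ref{thm:interpolation} with the block boundaries placed at $x_{jM}$ for $j=0:N$, so that the boundary subgrid becomes $\mathbf N=\{x_{jM-1},x_{jM},x_{jM+1}\}\cup\{x_0,x_1,x_{NM-1},x_{NM}\}$, of cardinality $3N+1$. Lemma~\ref{lemm:phi0} and Proposition~\ref{prop:phi} then hold verbatim with the stride $N$ replaced by $M$ and the block count kept at $N$; the only substitutions are $x_{jN}\mapsto x_{jM}$ in the node conditions and $k\in\{2,\dots,M-2\}$ in the admissible range of interior positions. I would build the level functions $\varphi_k$ for $k=0:M-1$ exactly as before, using one neuron each for the three special levels $k=0,1,M-1$ and three neurons each (via $\sigma(\phi_k^+)-\sigma(\phi_k)+\sigma(-\phi_k^-)$) for the generic levels $k=2:M-2$. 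This gives a second-layer width of $3+3(M-3)=3(M-2)$, and the sum $\widetilde f=\sum_{k=0}^{M-1}\varphi_k$ reproduces $\mathcal I_{\mathcal T}(f)$ by the same node-by-node verification of~\eqref{eq:varphik_interp}.

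The single point that actually requires checking --- and the reason the decoupling is legitimate --- is that every $\phi\in\Sigma_{\mathbf N}$ is linear on each block interior $[x_{jM+1},x_{(j+1)M-1}]$, because $\mathbf N$ contains no breakpoint strictly inside a block. This local linearity is precisely what allows the peak of $\varphi_k$ to be placed at an arbitrary interior position $x_{jM+k}$ for any $k\in\{2,\dots,M-2\}$: specifying $\phi_k^{\pm}$ at two nodes determines it at every interior node of the block, and the sign conditions~\eqref{eq:phik-_ineq},~\eqref{eq:phik+_ineq} continue to hold for the same reason as in the theorem. I do not expect any genuine obstacle here; the work is purely bookkeeping, namely confirming that no step of the original argument secretly used the equality (number of blocks) $=$ (block size), and tracking the boundary levels $k=0,1,M-1$ so that all $NM+1$ nodes are covered exactly once. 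Finally, the second inclusion $\mathcal N_{(3N+1,3(M-2))}\subset\mathcal N_{(3N+1,3M)}$ is immediate by padding the second layer with neurons carrying zero output weight.
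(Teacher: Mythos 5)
Your proposal is correct and is essentially the paper's own argument: the paper justifies Corollary~\ref{cor:AN2} by exactly the observation you make, namely that the block count (which fixes the first-layer width $3N+1$ via $|\mathbf N|$) and the block size (which fixes the range of the level index $k$ and hence the second-layer width $3(M-2)$) are independent parameters in the construction of Theorem~\ref{thm:interpolation}. Your write-up is in fact more detailed than the paper's one-line justification, and your key check --- linearity of every $\phi\in\Sigma_{\mathbf N}$ on each block interior --- is precisely the property that makes the decoupling legitimate.
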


To generalize the expressive theorem of ReLU DNNs from two hidden layers to general depth, we then note the following lemma~\cite{shen2020deep}.
\begin{lemma}[Lemma 3.2 of \cite{shen2020deep}]\label{lem:2toL}
    For any ReLU DNNs on $\mathbb R^d$, we have
    \begin{equation*}
        \mathcal N_{(N,NL)} \subset \mathcal N_{2N+2}^{L+1},
    \end{equation*}
    for any $d$ and $L \ge 1$.
\end{lemma}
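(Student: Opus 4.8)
The plan is to realize the width-$(NL)$ second hidden layer as $L$ successive blocks of $N$ neurons, processed one block per layer along a ``conveyor belt'' that carries forward the output of the first hidden layer together with a running partial sum. First I would write any $f \in \mathcal N_{(N,NL)}$ as
\begin{equation*}
 f(\bm x) = \sum_{i=1}^{L} W^3_{(i)} \sigma\!\left(W^2_{(i)} h(\bm x) + b^2_{(i)}\right) + b^3, \qquad h(\bm x) = \sigma(W^1 \bm x + b^1) \in \mathbb R^N,
\end{equation*}
where the $NL$ neurons of the second hidden layer have been partitioned into $L$ blocks of size $N$, with $W^2_{(i)} \in \mathbb R^{N\times N}$ and $b^2_{(i)}\in\mathbb R^N$ the weights and bias of block $i$, and $W^3_{(i)}\in\mathbb R^{1\times N}$ the corresponding output weights. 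The key observation is that $h(\bm x) \ge 0$ componentwise, so $h$ can be transported through any number of ReLU layers by the identity $\sigma(h) = h$ using only $N$ neurons per layer.

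Next I would build the deep network layer by layer. Hidden layer $1$ produces $h$. For $2 \le k \le L+1$, hidden layer $k$ simultaneously (i) carries $h$ forward via the identity ($N$ neurons, legitimate since $h \ge 0$); (ii) spawns the $(k-1)$-st block $G_{k-1} := \sigma(W^2_{(k-1)} h + b^2_{(k-1)})$ from the incoming copy of $h$ ($N$ neurons); and (iii) carries the running partial sum $s_{k-2} := \sum_{i=1}^{k-2} W^3_{(i)} G_i$ forward. Since $s$ may be negative, I would store it as the signed pair $(\sigma(s), \sigma(-s))$, which reconstructs $s = \sigma(s) - \sigma(-s)$ through a single linear map, so that $2$ neurons suffice. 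The update into layer $k+1$ folds the previous block's scalar contribution into the accumulator, $s_{k-1} = s_{k-2} + W^3_{(k-1)} G_{k-1}$, which is a linear combination of layer-$k$ neurons and hence a valid pre-activation. The final affine read-out then combines the last block $G_L$, the accumulator $s_{L-1}$, and the bias into $f = s_{L-1} + W^3_{(L)} G_L + b^3$.

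Finally I would tally the resources. The construction uses hidden layers $1,\dots,L+1$, i.e.\ depth $L+1$. A generic interior layer holds a carried copy of $h$ ($N$ neurons), one freshly spawned block ($N$ neurons), and the two-neuron accumulator, for a width of $2N+2$; the first two layers and the last layer use strictly fewer neurons and can be padded, so width $2N+2$ suffices throughout. This yields $\mathcal N_{(N,NL)} \subset \mathcal N_{2N+2}^{L+1}$ for every $d$ and $L \ge 1$. I expect the only delicate point to be the bookkeeping around the signed accumulator: one must verify that at each layer the quantity $s = \sigma(s) - \sigma(-s)$ is recovered exactly, and that the timing---which block is spawned at which layer, and when its scalar contribution is absorbed into $s$---lines up so that no neuron is referenced before it has been produced.
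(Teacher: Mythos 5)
Your construction is correct: writing the wide second hidden layer as $L$ blocks of $N$ neurons, carrying $h=\sigma(W^1\bm x+b^1)\ge 0$ forward by the identity $\sigma(h)=h$, spawning one block per layer, storing the (possibly negative) running sum as the two-neuron pair $\bigl(\sigma(s),\sigma(-s)\bigr)$, and folding everything into the final affine read-out all check out, and the tally of width $2N+2$ and depth $L+1$ is exactly right. Note that the paper itself gives no proof of this lemma---it is quoted from Lemma 3.2 of \cite{shen2020deep}---and your conveyor-belt argument is essentially the same construction used in that reference, so nothing further is needed.
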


As a result, we have the following theorem about the expressive power of ReLU DNNs on $[0,1]$.
\begin{theorem}
     On $[0,1]$, we have
    \begin{equation*}
        \mathcal A_{N^2L}  \subset \mathcal N_{6N+4}^{L+1},
    \end{equation*}
    for any $L\ge 1$.
\end{theorem}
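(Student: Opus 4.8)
The plan is to combine Corollary~\ref{cor:AN2} with Lemma~\ref{lem:2toL}: the former realizes any function in $\mathcal A_{N^2L}$ by a two-hidden-layer network of \emph{unequal} width, and the latter folds the wide second hidden layer into extra depth while holding the width constant. The only substantive choice is how to factor the segment count $N^2L$ so that the two-layer widths produced by Corollary~\ref{cor:AN2} match the hypotheses of Lemma~\ref{lem:2toL}.

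First I would apply Corollary~\ref{cor:AN2} with the factorization $N^2L = N\cdot(NL)$, that is, taking its first factor to be $N$ and its second factor $M$ to be $NL$. Using the looser inclusion in the corollary, this gives
\[
\mathcal A_{N^2L} = \mathcal A_{N\cdot(NL)} \subset \mathcal N_{(3N+1,\,3NL)}.
\]
Next I would widen the second hidden layer so that its width becomes an exact multiple of the first-layer width $3N+1$, as required by Lemma~\ref{lem:2toL}. Since any network is contained in a wider one (pad the extra neurons with zero weights) and since $(3N+1)L = 3NL + L \ge 3NL$, we obtain
\[
\mathcal N_{(3N+1,\,3NL)} \subset \mathcal N_{(3N+1,\,(3N+1)L)}.
\]
Finally I would invoke Lemma~\ref{lem:2toL} with its width parameter equal to $3N+1$ and its depth parameter equal to $L$, yielding
\[
\mathcal N_{(3N+1,\,(3N+1)L)} \subset \mathcal N_{2(3N+1)+2}^{L+1} = \mathcal N_{6N+4}^{L+1}.
\]
Chaining the three inclusions proves the theorem for every $L \ge 1$.

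The argument is essentially bookkeeping, so there is no deep obstacle; the single point requiring care is the middle widening step. Lemma~\ref{lem:2toL} is stated only for a second layer whose width is an integer multiple of the first, so one cannot feed $\mathcal N_{(3N+1,\,3NL)}$ into it directly and must first round $3NL$ up to $(3N+1)L$. This rounding is harmless for the final bound because $2(3N+1)+2 = 6N+4$ is unaffected by it. I would also remark that the same template immediately produces the corresponding result for networks with nonuniform width, since both Corollary~\ref{cor:AN2} and Lemma~\ref{lem:2toL} allow the two factors to be chosen independently.
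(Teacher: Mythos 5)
Your proposal is correct and follows exactly the same route as the paper: factor $N^2L = N\cdot(NL)$, apply Corollary~\ref{cor:AN2} to get $\mathcal A_{N^2L} \subset \mathcal N_{(3N+1,\,3NL)}$, widen to $\mathcal N_{(3N+1,\,(3N+1)L)}$, and finish with Lemma~\ref{lem:2toL}. The only difference is that you spell out the padding justification for the middle widening step, which the paper leaves implicit.
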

\begin{proof}
    By using Corollary~\ref{cor:AN2} and Lemma~\ref{lem:2toL}, we have
      \begin{equation*}
      \begin{split}
        \mathcal A_{N^2L} &= \mathcal A_{N\times NL} \subset \mathcal N_{(3N+1, 3NL)} \\
        &\subset \mathcal N_{(3N+1, (3N+1)L)} \subset \mathcal N_{6N+4}^{L+1},
      \end{split}
    \end{equation*}  
    for any $L \ge 1$.
\end{proof}

\section{The optimality of the expressivity}
\label{sec:optimalrep}
In this section, we demonstrate that $\mathcal O(N)$ parameters are requisite for representing any CPwL functions defined on $[0,1]$ with $N$ segments, which answers the optimality in Question~\ref{que:NL}. Central to our argument is an examination of the shattering capabilities of ReLU DNNs.

Before we state our main result, let us first introduce the definition of shattering sets \cite{sauer1972density,shelah1972combinatorial}. Let $X = \{x_1, \cdots, x_N\} \subset \Omega \subset \mathbb R^d$ be a finite set of points and $\mathcal F$ a class of real-valued functions on $\Omega$. The class $\mathcal F$ is defined to shatter the points set $X$ is given any signs 
\begin{equation*}
    (\varepsilon_1, \cdots, \varepsilon_N) \subset \{\pm 1\}^N,
\end{equation*}
there exists an $f\in \mathcal F$ such that 
\begin{equation*}
    {\rm sgn}\left(f(x_i)\right) = \varepsilon_i.
\end{equation*}
Here we are using the notation that
\begin{equation*}
    {\rm sgn}(x) = \begin{cases}
        -1 \quad &x < 0, \\
        1 \quad &x \ge 0.
    \end{cases}
\end{equation*}

In recent research presented in~\cite{siegel2023sharp}, the author established the subsequent theorem regarding the capacity of ReLU DNNs to shatter points.
\begin{theorem}[Theorem 1 of \cite{siegel2023sharp}]\label{thm:shattering}
Let $\delta >0$ and suppose that $\mathcal N_{N_{1:L}}$ shatters every subset $X = \{x_1,\cdots,x_N\} \subset [0,1]$ which satisfies $|x_i-x_j| \ge \delta $ for $i\neq j$. There exists a constant $c$ independent from $N$ such that if 
\begin{equation*}
\delta \le e^{-cN}    
\end{equation*}
we must have
\begin{equation*}
\sum_{i=1}^L (N_{i-1}+1)N_i + N_L \ge N/6,
\end{equation*}
where $N_0 = 1$.
\end{theorem}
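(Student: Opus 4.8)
The plan is to argue by contraposition and through a count of the sign patterns that the family $\mathcal N_{N_{1:L}}$ can realize. Writing $W := \sum_{i=1}^L (N_{i-1}+1)N_i + N_L$ for the total number of weights and biases, I would assume $W < N/6$ and exhibit a $\delta$-separated set $X \subset [0,1]$ together with a sign vector that no $f \in \mathcal N_{N_{1:L}}$ can produce, thereby showing the family fails to shatter $X$. The first reduction is that shattering $X = \{x_1 < \cdots < x_N\}$ forces the map $f \mapsto ({\rm sgn}\, f(x_1), \ldots, {\rm sgn}\, f(x_N))$ to be onto $\{\pm1\}^N$; the hardest patterns are the maximally oscillating ones, and realizing the alternating pattern requires $f$ to change sign $N-1$ times, hence to cross zero on at least $N-1$ distinct linear pieces. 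So the analysis reduces to bounding how many zero crossings, at prescribed separated locations, functions in the family can simultaneously control.

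For a single hidden layer this is immediate: an element of $\mathcal N_{N_1}$ is CPwL with at most $N_1$ breakpoints, hence at most $N_1+1$ pieces and $N_1$ sign changes, so shattering forces $N_1 \ge N-1$ and thus $W \ge N/6$ with room to spare, and the separation hypothesis is not even needed. The difficulty is entirely in the deep regime: a depth-$L$ network can produce up to $\prod_{\ell=1}^L (N_\ell+1)$ linear pieces, which is exponential in $L$, so naive piece counting only yields $\sum_\ell \log(N_\ell+1) \gtrsim \log N$ and is far from the claimed linear bound. To overcome this I would bound the number of \emph{distinct} sign vectors the $W$-parameter family realizes on a finite point set, using the region-counting technology behind VC-dimension estimates for piecewise-polynomial networks (Warren/Milnor--Thom bounds on the connected components cut out by the neuron activation boundaries): on any set of $M$ points this number is at most $2^{O(W \log M)}$. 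I would then take $X$ from the uniform $\delta$-grid of $[0,1]$, which carries $M = \lfloor 1/\delta \rfloor + 1 \ge e^{cN}$ automatically $\delta$-separated points, and run a pigeonhole argument -- since the grid is exponentially finer than the number of realizable patterns can resolve, some $N$-subset must escape shattering.

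The main obstacle, and the real content of the theorem, is that the generic region count carries a spurious depth/precision factor and only delivers a lower bound of order $N/\mathrm{polylog}$ rather than the sharp $N/6$. Removing this factor is exactly where one-dimensionality and the exponentially small spacing $\delta \le e^{-cN}$ must be used together: in 1D each linear piece contributes at most one zero crossing, so the positions of sign changes form a $W$-dimensional family of locations, and a set placed on a sufficiently fine ($e^{-cN}$) grid can be chosen in general position so that each parameter controls only a bounded number of independently-assignable signs. Making this quantitative -- tracking how finely the $W$ parameters can steer breakpoint locations and converting it into the clean constant $1/6$ -- is the delicate step; the choice of $c$ in $\delta \le e^{-cN}$ is precisely what guarantees enough candidate points for the counting to force $W \ge N/6$.
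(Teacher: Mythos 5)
A preliminary remark: the paper never proves Theorem~\ref{thm:shattering} at all --- it imports the statement verbatim as Theorem~1 of \cite{siegel2023sharp} and uses it as a black box in Section~\ref{sec:optimalrep}. So there is no in-paper proof to compare you against, and your attempt must be judged on its own merits. On those merits it has a genuine gap.

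The gap is in your central counting step, and it is quantitative, not a matter of bookkeeping. The Warren/Goldberg--Jerrum region-counting machinery bounds the number of sign vectors realizable on $M$ fixed points by a family with $W$ parameters and depth $L$ by $2^{O(WL\log M)}$, and both the factor $L$ and the factor $\log M$ are unavoidable, since bit extraction shows bounds of this shape are saturated. Now run your pigeonhole on the uniform $\delta$-grid: $M\approx e^{cN}$, hence $\log M\approx cN$, and the count reads $2^{O(WLcN)}$. Meanwhile, shattering \emph{every} $N$-subset of the grid forces far fewer realizable grid patterns than you need it to: a family of binary strings on $M$ coordinates whose restriction to every $N$-subset is all of $\{\pm 1\}^N$ (an $(M,N)$-universal family) exists with only $O(2^N N \log M)$ members, so the hypothesis can force at most roughly $2^N$ realizable patterns. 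The pigeonhole therefore yields $2^N \le 2^{O(WLcN)}$, i.e.\ $WL \gtrsim 1/c$ --- a constant, not $N/6$. If you instead apply the count to one fixed $N$-point set, you recover $N \le O(WL\log W)$, the standard VC-dimension bound, which is \emph{tight} for clustered points (bit extraction again) and hence can never be upgraded to $W \ge N/6$ without genuinely invoking the separation hypothesis; worse, since the depth $L$ is part of the arbitrary architecture (possibly of order $W$), this route gives only $W \gtrsim \sqrt{N/\log N}$. Your own diagnosis that the method loses a ``polylog'' factor substantially understates the failure.

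Moreover, your proposed repair is circular, and your opening reduction is a dead end. The assertion that ``each parameter controls only a bounded number of independently-assignable signs'' on a fine grid in general position is precisely the statement to be proved, not a tool available for proving it. And the heuristic that the hardest patterns are the maximally oscillating ones is false for deep networks: with $g(x) = 2\sigma(x) - 4\sigma(x-1/2) + 2\sigma(x-1)$ the sawtooth $g^{L}(x) - 1/2$, which uses $O(L)$ parameters, alternates in sign at the $2^L+1$ equispaced points $k2^{-L}$, which are $(2^{-L})$-separated; that is, the alternating pattern on $N$ well-separated points costs only $O(\log N)$ parameters. Consequently no argument that merely counts zero crossings or linear pieces at separated points can produce a lower bound linear in $N$. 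What has to be shown is that no family with $W < N/6$ parameters realizes \emph{all} $2^N$ patterns on \emph{all} $\delta$-separated sets, and any correct proof must exhibit the mechanism by which the separation hypothesis defeats bit extraction. Your proposal never supplies that mechanism --- which is exactly the content of the cited theorem.
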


Since shattering points is inherently less demanding than CPwL interpolation, we have the following result showing that at least $\mathcal O(N)$ parameters are needed to recover all CPwL functions with $N$ segments.
\begin{theorem}
If the following inclusion 
\begin{equation}\label{eq:AsubsetN}
   \mathcal A_N \subset \mathcal N_{N_{1:L}}
\end{equation}
holds on $[0,1]$ for any $N$, we must have
\begin{equation*}
    \sum_{i=1}^L (N_{i-1}+1)N_i + N_L \ge N/6,
\end{equation*}
where $N_0 = 1$.
\end{theorem}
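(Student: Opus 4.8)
The plan is to exploit the remark preceding the statement, namely that shattering a finite point set is strictly less demanding than interpolating prescribed values on it. Consequently, the hypothesis $\mathcal A_N \subset \mathcal N_{N_{1:L}}$ should force $\mathcal N_{N_{1:L}}$ to shatter sufficiently large well-separated point sets, and then Theorem~\ref{thm:shattering} delivers the parameter lower bound directly. Concretely, I would fix the architecture $N_{1:L}$ and the value $N$, assume $\mathcal A_N \subset \mathcal N_{N_{1:L}}$, and aim to verify the hypothesis of Theorem~\ref{thm:shattering} for point sets of cardinality $N$.

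The key step is an interpolation-count observation: given any $N$ distinct points $x_1 < \cdots < x_N$ in $[0,1]$ and any sign vector $(\varepsilon_1,\cdots,\varepsilon_N) \in \{\pm 1\}^N$, there is a CPwL function $f$ with at most $N-1$ segments satisfying $f(x_i) = \varepsilon_i$, hence ${\rm sgn}(f(x_i)) = \varepsilon_i$. I would construct $f$ by placing interior breakpoints only at $x_2, \ldots, x_{N-1}$ (that is $N-2$ breakpoints, hence $N-1$ linear pieces), taking the two boundary pieces to be the linear extensions of the line through $(x_1,\varepsilon_1),(x_2,\varepsilon_2)$ on $[0,x_2]$ and of the line through $(x_{N-1},\varepsilon_{N-1}),(x_N,\varepsilon_N)$ on $[x_{N-1},1]$, and the intermediate pieces to be the straight segments between consecutive data points. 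This interpolates all $N$ prescribed values while keeping exactly $N-1$ segments, so $f \in \mathcal A_{N-1} \subset \mathcal A_N$. It follows that $\mathcal A_N$, and therefore $\mathcal N_{N_{1:L}}$, shatters every $N$-point subset of $[0,1]$.

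Because this shattering holds for arbitrary distinct points regardless of their spacing, the separation requirement $|x_i-x_j|\ge \delta$ with $\delta \le e^{-cN}$ in Theorem~\ref{thm:shattering} is met vacuously for any admissible $\delta$. Invoking that theorem then yields $\sum_{i=1}^L (N_{i-1}+1)N_i + N_L \ge N/6$ with $N_0 = 1$, which is exactly the claimed bound.

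The argument is structurally short, so the only genuine care lies in the interpolation-count step. \textbf{The main obstacle} is to keep the construction inside $\mathcal A_N$: the boundary-extension trick is what avoids the naive count of $N+1$ segments (two extra constant pieces before $x_1$ and after $x_N$) and thereby pins the bound at $N/6$ rather than a weaker $(N-1)/6$; one must also check continuity of $f$ at the breakpoints and respect the paper's convention ${\rm sgn}(x)=1$ for $x \ge 0$, so that interpolating the exact values $\pm 1$ genuinely realizes the target signs. Degenerate configurations, such as a point coinciding with $0$ or $1$, only decrease the segment count and present no difficulty.
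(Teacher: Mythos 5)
Your proof is correct and follows essentially the same route as the paper: the interpolation power of $\mathcal A_N$ yields shattering of arbitrary $N$-point subsets of $[0,1]$, this transfers to $\mathcal N_{N_{1:L}}$ via the inclusion, and Theorem~\ref{thm:shattering} then gives the parameter lower bound. If anything, your boundary-extension count ($N-1$ segments for $N$ points) is written more carefully than the paper's own argument, which works with $(N+1)$-point grids containing the endpoints and treats the separation condition loosely, but the underlying idea is identical.
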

\begin{proof}
Given that $\mathcal A_N$ contains all CPwL functions on $[0,1]$ with $N$ segments and permits any grid point distribution, let us consider an arbitrary grid $\mathcal T =\left\{0 = x_0 < x_1 \cdots < x_{N-1} < x_N =1 \right\} \subset [0,1]$ which satisfies $|x_1| \ge e^{-cN}$ as $N \to \infty$.
Here, we note that $\mathcal A_N$ can shatter $\mathcal T$ since it captures all CPwL functions on $\mathcal T$ for any grid point distribution and given any set of target function values ${y_i}{i=0}^N$. Thereby, $\mathcal N{N_{1:L}}$ should similarly be capable of shattering every possible $\mathcal T$ if the inclusion in \eqref{eq:AsubsetN} holds. As a direct corollary of Theorem~\ref{thm:shattering}, we must have $\sum_{i=1}^L (N_{i-1}+1)N_i + N_L \ge N/6$.
\end{proof}

\section{A new approximation result of ReLU DNNs based on KST}
\label{sec:approx}
In this section, we investigate approximation results of ReLU DNNs with arbitrary depth and width, drawing upon the Kolmogorov Superposition Theorem. We initiate by presenting a version of KST as detailed in \cite{lorentz1966approximation}. Leveraging the framework of KST, we establish a comprehensive approximation result for any continuous functions defined on the interval $[0,1]^d$. Subsequently, by introducing a specific function class based on the function decomposition in KST, we obtain an improved approximation rate that effectively overcomes the curse of dimensionality.

\begin{theorem}[Kolmogorov Superposition Theorem \cite{lorentz1966approximation}]\label{thm:kolmogorov} 
For any $d \in \mathbb N$, $d\ge 2$, there exist irrational numbers $0<\lambda_i\leq 1$ for $i=1,2,\cdots, d$ and strictly increasing functions $\phi_k$ (independent from $f$) with Lipschitz constant $\log_{10}2$  
defined on $[0,1]$ for $k=0, 1, \cdots, 2d$ such that 
for every continuous function $f$ defined on $[0,1]^d$, 
there exists a continuous function $g: [0,d] \mapsto \mathbb R$ such that 
\begin{equation*}
    f(x_1,\cdots,x_d)=\sum_{k=0}^{2d} g\left(\sum_{i=1}^d\lambda_i\phi_k(x_i)\right).
\end{equation*}
\end{theorem}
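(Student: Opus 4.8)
The statement is the classical Kolmogorov Superposition Theorem in the sharp one-outer-function form of Lorentz, so rather than reprove it from scratch I would follow the two-stage strategy underlying \cite{lorentz1966approximation}: first fix, once and for all, the inner data $\lambda_1,\dots,\lambda_d$ and $\phi_0,\dots,\phi_{2d}$ enjoying a combinatorial separation property, and only then, for a given $f$, build the outer function $g$ by an iterative (geometric-series) argument. Throughout I abbreviate $\psi_k(\bm x)=\sum_{i=1}^d\lambda_i\phi_k(x_i)$, so the claim is that $f=\sum_{k=0}^{2d}g\circ\psi_k$.

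For the inner data I would take $\lambda_1,\dots,\lambda_d\in(0,1]$ rationally independent (each irrational and independent over $\mathbb Q$), which guarantees that the linear forms $\sum_i\lambda_i t_i$ separate the vertices of a rational grid. The $\phi_k$ are constructed as uniform limits of strictly increasing piecewise-linear functions whose increments are placed on a decimal (base-$10$) refinement of $[0,1]$; controlling the total increment at each scale by a geometric factor yields strict monotonicity together with the Lipschitz bound $\log_{10}2$. The heart of the construction is the following separation lemma: for every scale there is a finite family of cubes covering $[0,1]^d$ such that for each cube $Q$ there are at least $d+1$ indices $k$ for which the image interval $\psi_k(Q)$ is disjoint from $\psi_k(Q')$ for every other cube $Q'$ of the family. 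One secures this by shifting the partition defining each $\phi_k$ so that, by a pigeonhole count over the $2d+1$ indices, at most $d$ indices can be ``bad'' at any given cube.

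Granting the separation lemma, I would construct $g$ by successive approximation. Writing $\|\cdot\|$ for the sup-norm and normalizing $\|f\|\le1$, at a sufficiently fine scale I would define a continuous $g_0$ on $[0,d]$, supported on the pure image intervals and with $\|g_0\|\le\frac{1}{2d+1}$, so that on each cube $Q$ with sample point $x_Q\in Q$ the $d+1$ good indices force $\sum_k g_0(\psi_k(\bm x))$ to equal roughly $\frac{d+1}{2d+1}f(x_Q)$, while the at most $d$ bad indices contribute an error of size at most $\frac{d}{2d+1}\|f\|$. A short estimate then gives the contraction
\begin{equation*}
\Bigl\|f-\sum_{k=0}^{2d}g_0\circ\psi_k\Bigr\|\le\frac{2d}{2d+1}\|f\|.
\end{equation*}
Applying the same step to the residual and iterating produces $g_n$ with $\|g_n\|\le\frac{1}{2d+1}\bigl(\frac{2d}{2d+1}\bigr)^n$; the series $g=\sum_n g_n$ converges uniformly, hence defines a continuous outer function, and by construction $f=\sum_k g\circ\psi_k$.

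The routine parts are the Lipschitz and monotonicity bookkeeping for the $\phi_k$ and the geometric summation defining $g$. The genuine obstacle is the separation lemma: one must choose the finitely many shifts of the base-$10$ partitions simultaneously for all $2d+1$ indices so that the $(d+1)$-fold good-index property holds uniformly across scales, while keeping the $\phi_k$ strictly increasing and within the prescribed Lipschitz constant. Balancing these competing demands---enough overlap for covering, enough disjointness for separation---is the crux of the whole argument and the reason $2d+1$ inner functions are both necessary and sufficient.
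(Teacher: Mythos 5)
The first thing to note is that the paper contains no proof of this statement: Theorem~\ref{thm:kolmogorov} is imported verbatim from \cite{lorentz1966approximation} and used as a black box in Section~\ref{sec:approx}, so the only meaningful benchmark is the classical Kolmogorov--Lorentz argument that you are sketching. Against that benchmark your outline has the right architecture: fix the inner data $(\lambda_i,\phi_k)$ once and for all, establish a covering/separation property for systems of cubes at every scale, and then, for a given $f$, build the outer function $g$ by successive approximation with geometrically decaying residuals. This is indeed the strategy of the cited source.

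As a proof, however, your proposal has a genuine gap, and it sits exactly where you flag it: the separation lemma is asserted, not proved, and essentially all of the content of KST is concentrated there. What must actually be established is the following: for every rank $r$ there are $2d+1$ families of pairwise disjoint cubes (each family a shifted grid with gaps) such that every point of $[0,1]^d$ lies inside cubes of at least $d+1$ of the families --- note this good-index count is a \emph{per-point} statement, not the per-cube statement you wrote --- and, simultaneously, the strictly increasing $\phi_k$ and the numbers $\lambda_i$ must be constructed, rank by rank and consistently in the limit $r\to\infty$, so that for each fixed $k$ the image intervals $\psi_k(Q)=\sum_{i=1}^d\lambda_i\phi_k(Q_i)$ of the cubes $Q$ in family $k$ are pairwise disjoint, all while keeping each $\phi_k$ Lipschitz with constant $\log_{10}2$. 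Rational independence of the $\lambda_i$ alone does not yield this disjointness; in Lorentz's construction it is engineered through the inductive definition of the $\phi_k$, and that induction is precisely the piece you have not supplied. Two smaller quantitative slips: (i) your contraction estimate omits the oscillation of $f$ on the cubes, so the correct claim is that for any $\theta$ with $\frac{2d}{2d+1}<\theta<1$ one can choose the rank fine enough (depending on the residual) to get error at most $\theta\|f\|$, not the exact factor $\frac{2d}{2d+1}$; (ii) in the iteration, the rank at step $n$ must be adapted to the modulus of continuity of the $n$-th residual, which is legitimate only because the covering property is available at all ranks --- worth saying explicitly. Finally, the closing claim that $2d+1$ inner functions are \emph{necessary} is a separate theorem (the paper cites \cite{hattori1993dimension} for it) and plays no role in proving the present statement.
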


We re-emphasize that the K-inner functions, $\phi_k$ for $k=0:2d$ are independent of $f$. However, the K-outer function, $g$, determined by $f$, lacks uniqueness even when all K-inner functions $\phi_k$ are given~\cite{liu2015kolmogorov}. This non-uniqueness of outer functions doesn't conflict with the principles of KST. For any given $f$, its outer function can be selected in a specific manner. For instance, in the proofs of \cite{kolmogorov1957representation,lorentz1966approximation,sprecher1965structure}, a unique and distinct K-outer function $g$ is constructed for $f$ in a specific way described by the authors. 
In our study, we consistently operate under the presumption that the K-outer function $g$ for $f$ follows the formulation presented in \cite{lorentz1966approximation}.

We then have the following error estimate for approximating any continuous function on $[0,1]^d$ using ReLU DNNs with general depth and width.
\begin{theorem}\label{thm:DNNKST}
    For any continuous function $f(\bm x)$ on $[0,1]^d$, we have
    \begin{equation*}
        \inf_{\widetilde f \in \mathcal N_{(2d+1)(6N+4)}^{2(L+1)}} \|f-\widetilde f\|_{L^\infty\left([0,1]^d\right)} \le (6d+3)\omega_g\left(C_dN^{-2}L^{-1}\right)
    \end{equation*}
    where $C_d = d\log_{10}2$ and $g: [0,d] \mapsto \mathbb R$ is the K-outer function depending on $f(\bm x)$ and $\phi_k$ given by KST in \ref{thm:kolmogorov}.
\end{theorem}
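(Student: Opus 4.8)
The plan is to discretize both layers of the Kolmogorov decomposition
\[
f(\bm x)=\sum_{k=0}^{2d} g\big(y_k(\bm x)\big), \qquad y_k(\bm x)=\sum_{i=1}^d \lambda_i\phi_k(x_i),
\]
by replacing every univariate ingredient with its CPwL interpolant and then invoking the expressivity theorem $\mathcal A_{N^2L}\subset\mathcal N_{6N+4}^{L+1}$ to realize each interpolant \emph{exactly} as a ReLU DNN. Concretely, I would take $\widetilde\phi_k=\mathcal I_{\mathcal T_1}(\phi_k)$ to be the CPwL interpolant of $\phi_k$ on a uniform grid of $[0,1]$ with $N^2L$ segments (mesh $r_1=N^{-2}L^{-1}$), set $\widetilde y_k(\bm x)=\sum_{i=1}^d\lambda_i\widetilde\phi_k(x_i)$, take $\widetilde g=\mathcal I_{\mathcal T_2}(g)$ to be the CPwL interpolant of $g$ on a uniform grid of $[0,d]$ with mesh $r_2=C_dN^{-2}L^{-1}$, and finally define the candidate $\widetilde f(\bm x)=\sum_{k=0}^{2d}\widetilde g\big(\widetilde y_k(\bm x)\big)$. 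A harmless preliminary point to check is that $\widetilde\phi_k$ preserves the range of $\phi_k$, so that $\widetilde y_k(\bm x)\in[0,d]$ and composing with $\widetilde g$ is legitimate.

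The error estimate is the clean part of the argument. For each $k$ the triangle inequality gives
\[
|g(y_k)-\widetilde g(\widetilde y_k)|\le |g(y_k)-g(\widetilde y_k)|+|g(\widetilde y_k)-\widetilde g(\widetilde y_k)|\le \omega_g\big(|y_k-\widetilde y_k|\big)+\|g-\widetilde g\|_{L^\infty([0,d])}.
\]
Since each $\phi_k$ is Lipschitz with constant $\log_{10}2$ and $0<\lambda_i\le 1$, the Lipschitz interpolation bound in Properties~\ref{pro:interpolation} yields
\[
|y_k-\widetilde y_k|\le \sum_{i=1}^d \lambda_i\,\|\phi_k-\widetilde\phi_k\|_{L^\infty([0,1])}\le d(\log_{10}2)\,r_1=C_dN^{-2}L^{-1},
\]
while the modulus-of-continuity interpolation bound gives $\|g-\widetilde g\|_{L^\infty([0,d])}\le 2\omega_g(r_2)=2\omega_g(C_dN^{-2}L^{-1})$. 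Summing the $2d+1$ branches and using monotonicity of $\omega_g$ recovers exactly
\[
\|f-\widetilde f\|_{L^\infty([0,1]^d)}\le (2d+1)\cdot 3\,\omega_g\!\left(C_dN^{-2}L^{-1}\right)=(6d+3)\,\omega_g\!\left(C_dN^{-2}L^{-1}\right).
\]

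The architecture bookkeeping is where the real care is required, and I expect it to be the main obstacle. By the expressivity theorem each $\widetilde\phi_k\in\mathcal N_{6N+4}^{L+1}$, so the first half of the network (depth $L+1$) should evaluate the inner interpolants and form the $2d+1$ scalar preactivations $\widetilde y_k$; the second half (another depth $L+1$, for a total depth $2(L+1)$) should apply $\widetilde g$ to each $\widetilde y_k$ and sum, with the $2d+1$ parallel branches producing the factor $2d+1$ in the width. The delicate points to verify are (i) that the linear map forming $\widetilde y_k$ is absorbed into the output layer of the inner block and the input layer of $\widetilde g$ without adding depth, and (ii) that the inner block — which must evaluate $\widetilde\phi_k$ at all $d$ coordinates for each of the $2d+1$ indices — can be packed into the claimed width. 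This second point is the crux, since naively stacking the inner copies contributes a width of order $(2d+1)d(6N+4)$ and the slightly finer outer grid needs $\approx N^2L/\log_{10}2$ segments; reconciling this with the stated width $\mathcal N_{(2d+1)(6N+4)}^{2(L+1)}$ is precisely the step whose constants must be tracked with care, and I would devote the bulk of the detailed write-up to justifying that packing (or to absorbing the discrepancy into the architecture constants).
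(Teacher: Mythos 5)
Your construction and error analysis coincide with the paper's own proof of Theorem~\ref{thm:DNNKST}: the same candidate $\widetilde f(\bm x)=\sum_{k=0}^{2d}\widetilde g\bigl(\sum_{i=1}^d\lambda_i\widetilde\phi_k(x_i)\bigr)$ with $\widetilde\phi_k=\mathcal I_{\mathcal T_1}(\phi_k)$ on the uniform $N^2L$-segment grid, the same range-preservation check via the third item of Properties~\ref{pro:interpolation}, and the same triangle-inequality split into $\omega_g\bigl(|y_k-\widetilde y_k|\bigr)$ plus $\|g-\widetilde g\|_{L^\infty([0,d])}$. Your only deviation is the outer mesh: you take mesh size $C_dN^{-2}L^{-1}$ directly, while the paper uses $(2d+1)^2N^2L$ uniform segments on $[0,d]$ and then merges the two terms using $C_d\ge(2d+1)^{-1}$; both land on exactly $(6d+3)\omega_g(C_dN^{-2}L^{-1})$, so this difference is immaterial.

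The packing issue you single out as the crux deserves a direct answer: it cannot be justified at the stated width, and the paper does not justify it either --- its proof disposes of the bookkeeping in one sentence (``by composite these networks together, we have $\widetilde f(x)\in\mathcal N_{(2d+1)(6N+4)}^{2(L+1)}$''). A careful count goes as follows. The first depth-$(L+1)$ block must evaluate $\widetilde\phi_k(x_i)$ for all $(2d+1)d$ pairs $(k,i)$; the hidden neurons of a deep network evaluating $\widetilde\phi_k$ at the coordinate $x_i$ cannot serve the evaluation at a different coordinate $x_{i'}$ (and, beyond the first layer, the layers depend on the function values, so they also cannot be shared across $k$), so these are $(2d+1)d$ parallel width-$(6N+4)$ networks, i.e.\ width $(2d+1)d(6N+4)$. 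The second depth-$(L+1)$ block needs $2d+1$ parallel copies of $\widetilde g$, of width roughly $6(2d+1)N+4$ each on the paper's grid (about $12N+4$ each on your coarser grid). Since the total depth is pinned at $2(L+1)$, all branches must run in parallel and the inner evaluations must precede the outer ones, so this construction genuinely requires width of order $d^2N$, exceeding the stated $(2d+1)(6N+4)$ by a factor of order $d$. The correct conclusion is the one you offer as a fallback: enlarge the width constant by an $\mathcal O(d)$ factor, which leaves the $\mathcal O(N^{-2}L^{-1})$ rate for fixed $d$ untouched; no amount of care in the write-up will pack the construction into exactly $\mathcal N_{(2d+1)(6N+4)}^{2(L+1)}$. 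Your point (i), absorbing the linear maps $\sum_i\lambda_i(\cdot)$ and the final sum over $k$ into adjacent affine layers, is standard and costs no depth or width.
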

\begin{proof}
For any $\bm x=(x_1,x_2,\cdots,x_d) \in [0,1]^d$, let us define 
\begin{equation*}
    \widetilde f(\bm x) = \sum_{k=0}^{2d} \widetilde{g}\left(\sum_{i=1}^d\lambda_i\widetilde{\phi}_k(x_i)\right)
\end{equation*}
where $\widetilde{\phi}_k: [0,1] \mapsto \mathbb R$ and $\widetilde g: [0,d] \mapsto \mathbb R$. In particular, we construct 
\begin{equation*}
   \widetilde{\phi}_k = \mathcal I_{\mathcal T_1}(\phi_k)  \in \mathcal N_{6N+4}^{L+1} \quad \forall k=0:2d
\end{equation*}
and
\begin{equation*}
    \widetilde{g} = \mathcal I_{\mathcal T_2}(g)  \in \mathcal N_{5(2d+1)N+4}^{L+1} \subset N_{(2d+1)(6N+4)}^{L+1},
\end{equation*}
where $\mathcal T_1 =\left\{0,  \frac{1}{N^2L}, \cdots, \frac{N^2L-1}{N^2L}, 1 \right\}$
is the uniform grid on $[0,1]$ with $N^2L$ segments and  
$\mathcal T_2 =\left\{0,  \frac{1}{(2d+1)^2N^2L}, \cdots, \frac{(2d+1)^2N^2L-1}{(2d+1)^2N^2L}, 1 \right\}$
is the uniform grid on $[0,d]$ with $(2d+1)^2N^2L$ segments. By composite these networks together, we have $\widetilde f(x) \in \mathcal N_{(2d+1)(6N+4)}^{2(L+1)}$.

Given the properties of piecewise linear interpolation in Properties~\ref{pro:interpolation}, we have the following estimates 
\begin{equation*}
\begin{aligned}
   \left\|g - \widetilde g \right\|_{L^{\infty}([0,d])} &\le 2\omega_{g}\left(d((2d+1)N)^{-2}L^{-1}\right)  \\
   &\le 2\omega_{g}\left(N^{-2}((2d+1)L)^{-1}\right)
\end{aligned}
\end{equation*}
   and
\begin{equation*}
 \left\|\phi - \widetilde \phi_k \right\|_{L^{\infty}(I)} \le \log_{10}2 N^{-2}L^{-1} \quad \forall k=0:2d.
\end{equation*}
Furthermore, we have
\begin{equation*}
 \left\| \widetilde \phi_k \right\|_{L^{\infty}([0,1])} =  \left\|  \mathcal I_{\mathcal T_1}(\phi_k) \right\|_{L^{\infty}([0,1])} \le  \left\|\phi\right\|_{L^{\infty}([0,1])}
\end{equation*}
for all $k=0:2d$. This means $\sum_{i=1}^d\lambda_i\widetilde{\phi}_k(x_i) \in [0,d]$ which is is necessary in the following estimate.

Now, for any $\bm x = (x_1,x_2,\cdots,x_d) \in [0,1]^d$, we have
\begin{align*}
    &\left|f(\bm x) - \widetilde f(\bm x)\right| \\
    = &\left| \sum_{k=0}^{2d} g\left(\sum_{i=1}^d\lambda_i\phi_k(x_i)\right) - \sum_{k=0}^{2d} \widetilde{g}\left(\sum_{i=1}^d\lambda_i\widetilde{\phi}_k(x_i)\right) \right| \\
    \le &\sum_{k=0}^{2d} \left|g\left(\sum_{i=1}^d\lambda_i\phi_k(x_i)\right) -\widetilde{g}\left(\sum_{i=1}^d\lambda_i\widetilde{\phi}_k(x_i)\right) \right| \\
    \le &\sum_{k=0}^{2d} \left|g\left(\sum_{i=1}^d\lambda_i\phi_k(x_i)\right) -g\left(\sum_{i=1}^d\lambda_i\widetilde{\phi}_k(x_i)\right) \right|  \\
    + &\sum_{k=0}^{2d} \left|g\left(\sum_{i=1}^d\lambda_i\widetilde \phi_k(x_i)\right) -\widetilde{g}\left(\sum_{i=1}^d\lambda_i\widetilde{\phi}_k(x_i)\right) \right| \\
    \le &\sum_{k=0}^{2d}\left( \omega_g\left(\sum_{i=1}^d\lambda_i\left(\phi_k(x_i) - \widetilde \phi_k(x_i)\right) \right) +  \left\|g - \widetilde g \right\|_{L^{\infty}([0,d])} \right) \\
    \le &\sum_{k=0}^{2d}\left( \omega_g\left(d\|\phi_k - \widetilde \phi_k\|_{L^{\infty}([0,1])}\right) +  \left\|g - \widetilde g \right\|_{L^{\infty}([0,d])} \right) \\
    \le &\sum_{k=0}^{2d}\left( \omega_g\left( C_d N^{-2}L^{-1}\right) + 2\omega_g\left(N^{-2}((2d+1)L)^{-1}\right) \right) \\
    \le & (6d+3)\omega_g\left( C_d N^{-2}L^{-1}\right),
\end{align*}
where $C_d = d \log_{10}2$. The last inequality holds since $C_d = d \log_{10}2 \ge (2d+1)^{-1}$ for any $d\ge1$.
\end{proof}

Given that the K-inner functions $\phi_k$ for $k=0:2d$ are independent of $f$, we define the following function class on $[0,1]^d$:
\begin{equation}\label{eq:Kc}
    K_C := \left\{ f(\bm x)=\sum_{k=0}^{2d} g\left(\sum_{i=1}^d\lambda_i\phi_k(x_i)\right) ~:~
    g \in {\rm Lip}_C\left([0,d]\right) \right\}.
\end{equation}
This is called the K-Lipschitz continuous function set with a K-Lipschitz constant of $C$ as listed in \cite{lai2021kolmogorov}. In addition, in a recent study by the authors of~\cite{lai2021kolmogorov}, an approximation rate of $\mathcal O(P^{-\frac{1}{2}})$ for functions in $K_C$ was achieved using ReLU DNNs with two hidden layers and $\mathcal O(P)$ parameters. 

Building upon Theorem~\ref{thm:DNNKST}, we have a better error estimate for functions in $K_C$, considering a more general architecture of ReLU DNNs:
\begin{corollary}\label{cor:KcApprox}    
For any function $f \in K_C$, we have
\begin{equation*}
    \inf_{\widetilde f \in \mathcal N_{(2d+1)(6N+4)}^{2(L+1)}} \|f-\widetilde f\|_{L^\infty([0,1]^d)} = (6d+3)CC_dN^{-2}L^{-1},
\end{equation*}
    where  $C_d = d\log_{10}2$ and $g: [0,d] \mapsto \mathbb R$ is the K-outer function given by KST in \ref{thm:kolmogorov}.
\end{corollary}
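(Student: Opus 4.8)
The plan is to deduce this immediately from Theorem~\ref{thm:DNNKST} by converting the modulus of continuity of the K-outer function into its Lipschitz constant. First I would note that the definition of $K_C$ in \eqref{eq:Kc} requires precisely that the K-outer function $g$ attached to $f$ belongs to ${\rm Lip}_C([0,d])$; this is the only extra structural information available beyond the hypotheses of Theorem~\ref{thm:DNNKST}, and it is exactly what is needed.

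The key step is the elementary inequality $\omega_g(r) \le Cr$ valid for every $r > 0$. This is read straight off the definition of the modulus of continuity in Notation~\ref{not:C}: for any pair $x, y$ with $|x - y| \le r$, the Lipschitz bound gives $|g(x) - g(y)| \le C|x - y| \le Cr$, and taking the supremum over such pairs yields $\omega_g(r) \le Cr$. I would then specialize to $r = C_d N^{-2} L^{-1}$.

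Finally I would invoke Theorem~\ref{thm:DNNKST}, which already supplies the architecture $\mathcal N_{(2d+1)(6N+4)}^{2(L+1)}$ together with the estimate $(6d+3)\omega_g(C_d N^{-2} L^{-1})$; substituting the bound from the previous step turns $\omega_g(C_d N^{-2} L^{-1})$ into $C C_d N^{-2} L^{-1}$ and produces the claimed $(6d+3) C C_d N^{-2} L^{-1}$. There is essentially no obstacle here, since all the work---the KST decomposition, the simultaneous CPwL interpolation of the inner and outer functions, the verification that $\sum_{i=1}^d \lambda_i \widetilde\phi_k(x_i) \in [0,d]$, and the triangle-inequality splitting of the error---was already completed in the proof of Theorem~\ref{thm:DNNKST}. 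The only subtlety worth recording is that the displayed relation is genuinely an upper bound ($\le$) rather than an exact equality: the left-hand side is an infimum that the explicit construction merely certifies, and no matching lower bound is asserted or needed.
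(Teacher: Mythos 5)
Your proof is correct and takes essentially the same route the paper intends: the paper states this corollary without a separate proof, treating it as an immediate consequence of Theorem~\ref{thm:DNNKST} obtained by bounding $\omega_g(r) \le Cr$ for $g \in {\rm Lip}_C([0,d])$ and evaluating at $r = C_d N^{-2}L^{-1}$, which is precisely your argument. Your closing remark is also a valid catch: the ``$=$'' in the corollary's display should be ``$\le$'', since the construction only certifies an upper bound on the infimum and no matching lower bound is proved.
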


It's worth noting that the total parameter count in $\mathcal N_{(2d+1)(6N+4)}^{2(L+1)}$ is approximately $2(L+1) \times \left((2d+1)(6N+4)^2\right) = \mathcal O(N^2L)$. Hence, this outcome indicates that functions within $K_C$ can be approximated at an error rate of $\mathcal O\left(P^{-1}\right)$ when employing ReLU DNNs having $\mathcal O(P)$ overall parameters across general depths and widths.

\section{Conclusion}
\label{sec:conclusion}
In conclusion, our study has shed further light on the optimal expressive power of ReLU DNNs and its utility in approximation, specifically employing the Kolmogorov Superposition Theorem. We provided a constructive proof, asserting that ReLU DNNs with $L$ hidden layers and $N$ neurons per layer can represent any CPwL functions on the interval $[0, 1]$ comprising $\mathcal O(N^2L)$ segments. Additionally, we demonstrated the optimality of this construction in relation to the number of parameters in the DNNs, a feat accomplished through an exhaustive analysis of the shattering capacity of ReLU DNNs. Finally, we harnessed the Kolmogorov Superposition Theorem to improve the approximation rate of ReLU DNNs of variable widths and depths when approximating continuous functions in high-dimensional spaces. 

This work gives rise to a variety of interesting and meaningful questions related to the expressivity and approximation of ReLU DNNs. For instance, our exact representation result in Theorem~\ref{thm:interpolation} could simplify the proofs presented in~\cite{shen2019nonlinear,shen2020deep,lu2021deep,shen2022optimal}. Additionally, we aim to extend the construction in Theorem~\ref{thm:interpolation} for ReLU DNNs with arbitrary depth and width from one-dimensional space to high-dimensional cases. If proven, this result could be further combined with the bit-extraction technique, leading to new approximation results.
Furthermore, our Theorem~\ref{thm:DNNKST} and Corollary~\ref{cor:KcApprox} lay the foundation for stimulating inquiries into pure mathematical aspects of KST. A pertinent question would be how the modulus of continuity of $g$ in the KST is influenced by the original function $f$ and the associated construction. Another point of interest is the functional class of K-Lipschitz continuous functions as defined in~\eqref{eq:Kc}. How extensive is this class, and can it serve as a sufficient hypothesis space in the domains of machine learning or scientific computing? These inquiries and their potential answers will likely advance our understanding of these fields.

\section*{Acknowledgements}
We would like to thank Jinchao Xu, Zuowei Shen, and Jonathon Siege for helpful discussions. This work was supported by the KAUST Baseline Research Fund.

\bibliographystyle{abbrv} 
\bibliography{main.bib}

\newpage
\appendix
\section{Function diagrams from Proof of Theorem~\ref{thm:interpolation}}
\label{sec:appendix}
On this page, we show diagrams for $\phi_0(x)$, $\phi_1(x)$, and $\phi_{N-1}(x)$ as defined in \eqref{eq:phi0}, \eqref{eq:phi1}, and \eqref{eq:phiN-1}, respectively.
\begin{figure}[H]
	\centering
	\includegraphics[width=.75\textwidth]{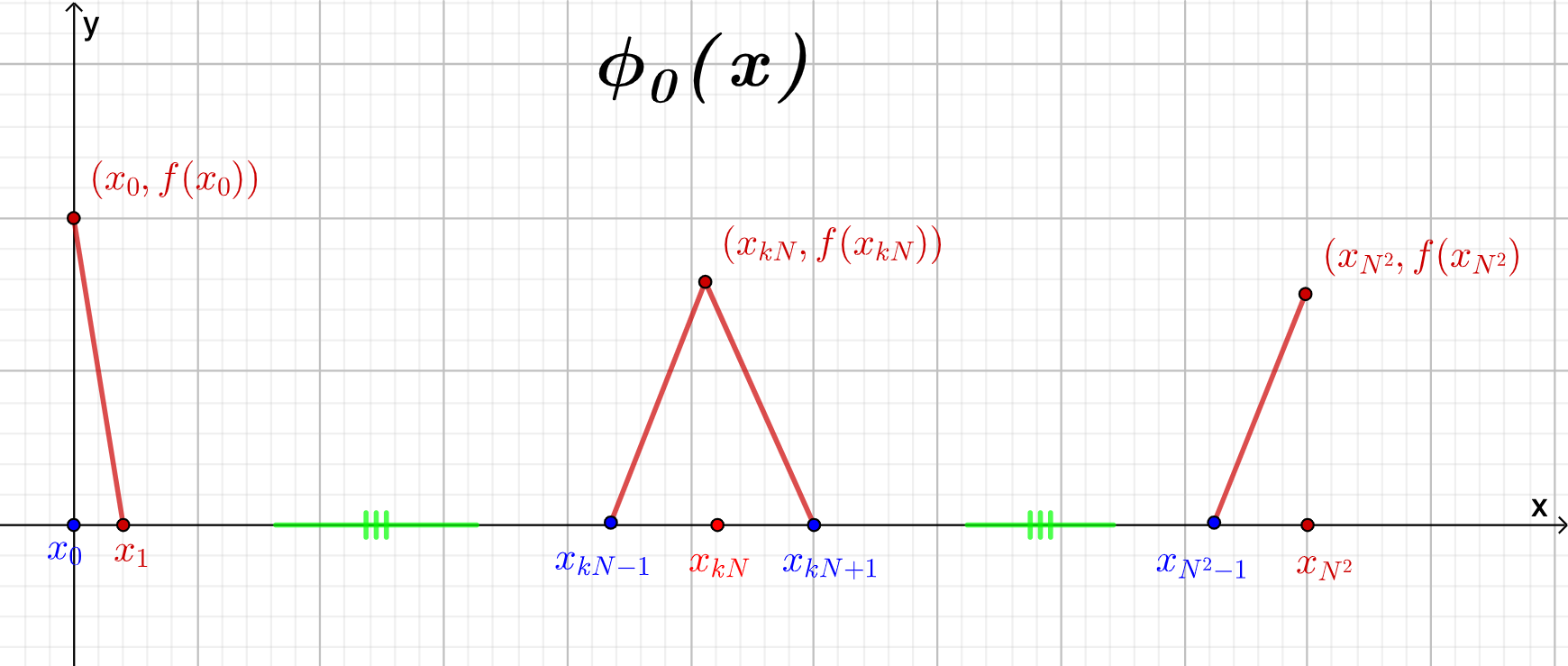} \\ \vspace{3pt}
        \includegraphics[width=.75\textwidth]{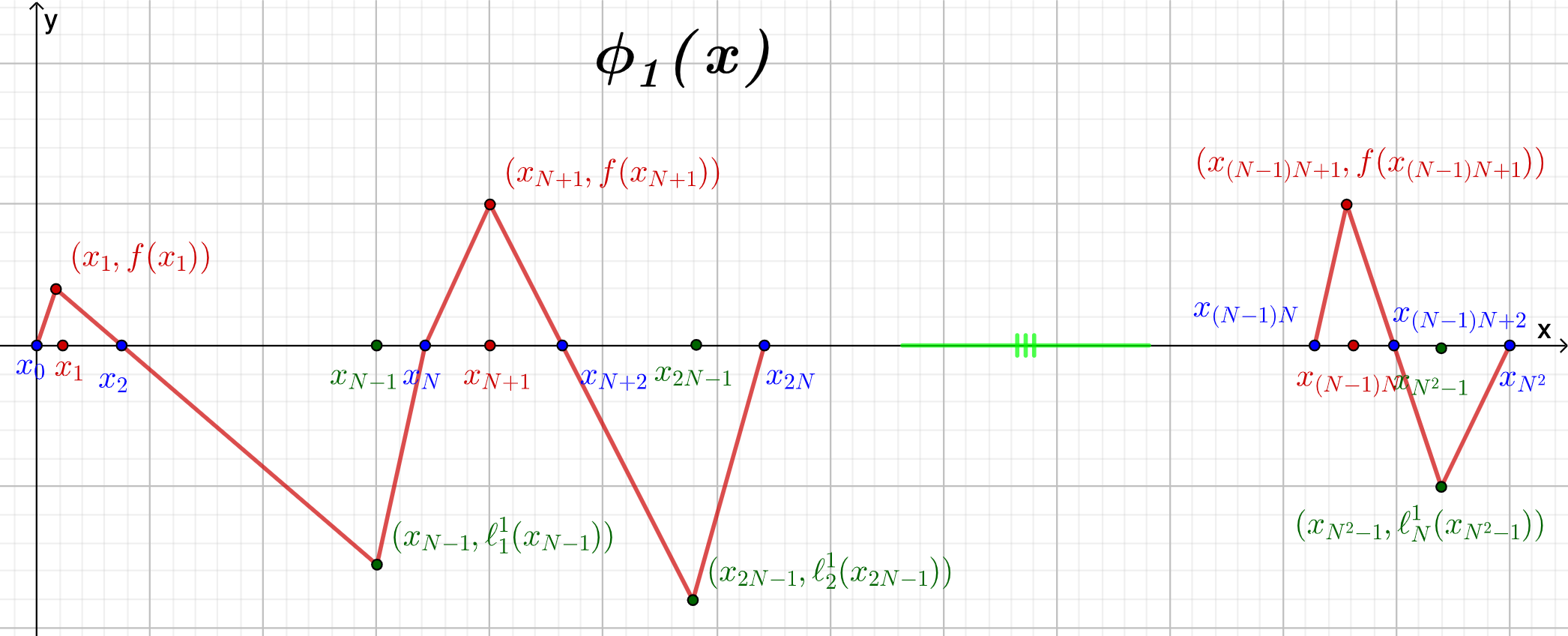} \\ \vspace{3pt}
        \includegraphics[width=.75\textwidth]{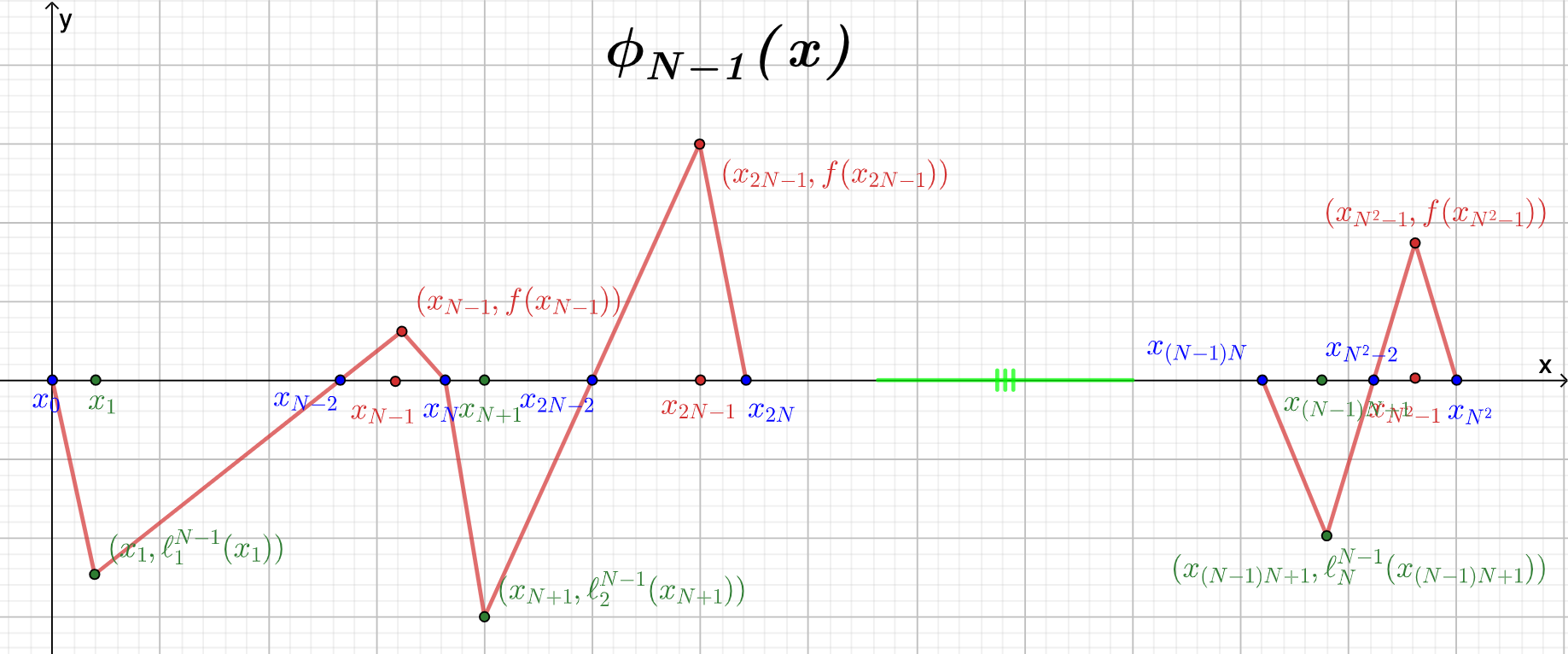} 
	\caption{Diagrams of $\phi_0(x)$, $\phi_1(x)$, and $\phi_{N-1}(x)$.}
	\label{fig:phi01N-1}
\end{figure}

\newpage
On this page, we present diagrams for $\phi_k^-(x)$, $\phi_k^+(x)$, and $\phi_k(x)$ as defined in \eqref{eq:phik-_ineq}, \eqref{eq:phik+}, and \eqref{eq:phik}, respectively.
\begin{figure}[H]
	\centering
	\includegraphics[width=.75\textwidth]{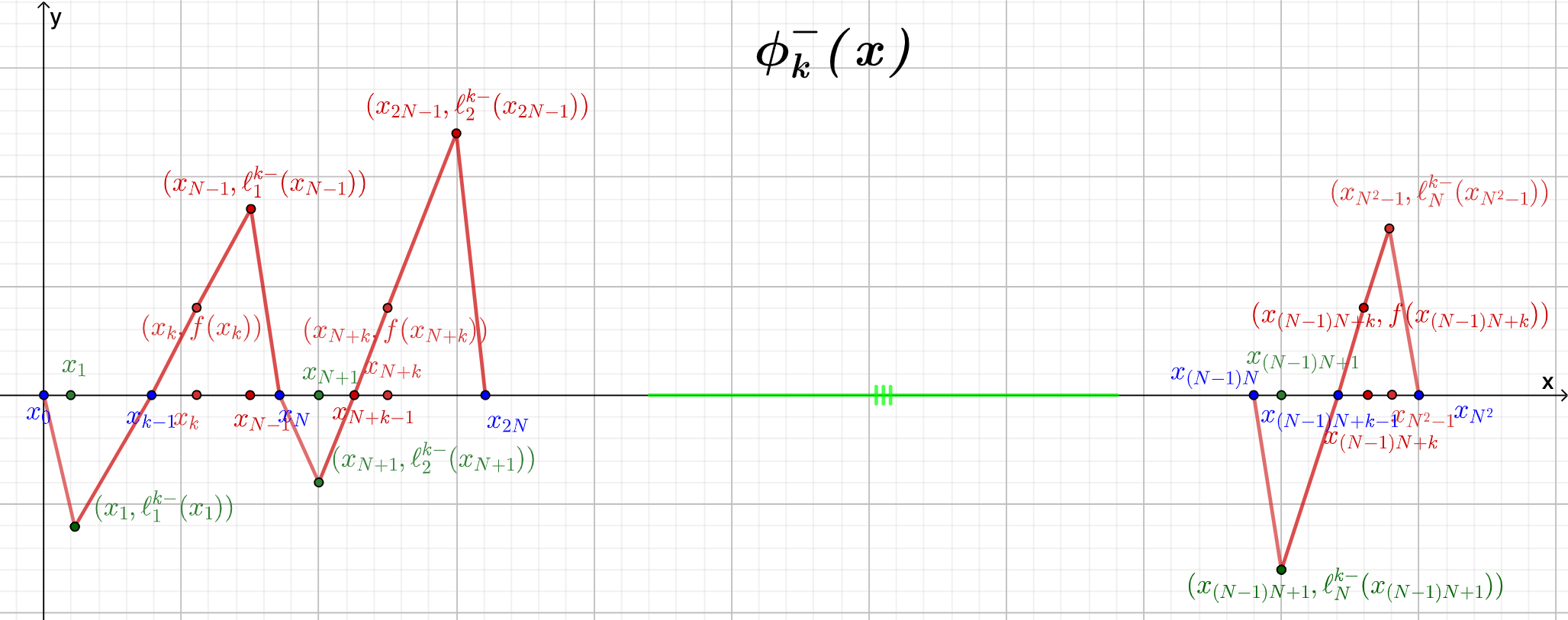} \\ \vspace{5pt}
        \includegraphics[width=.75\textwidth]{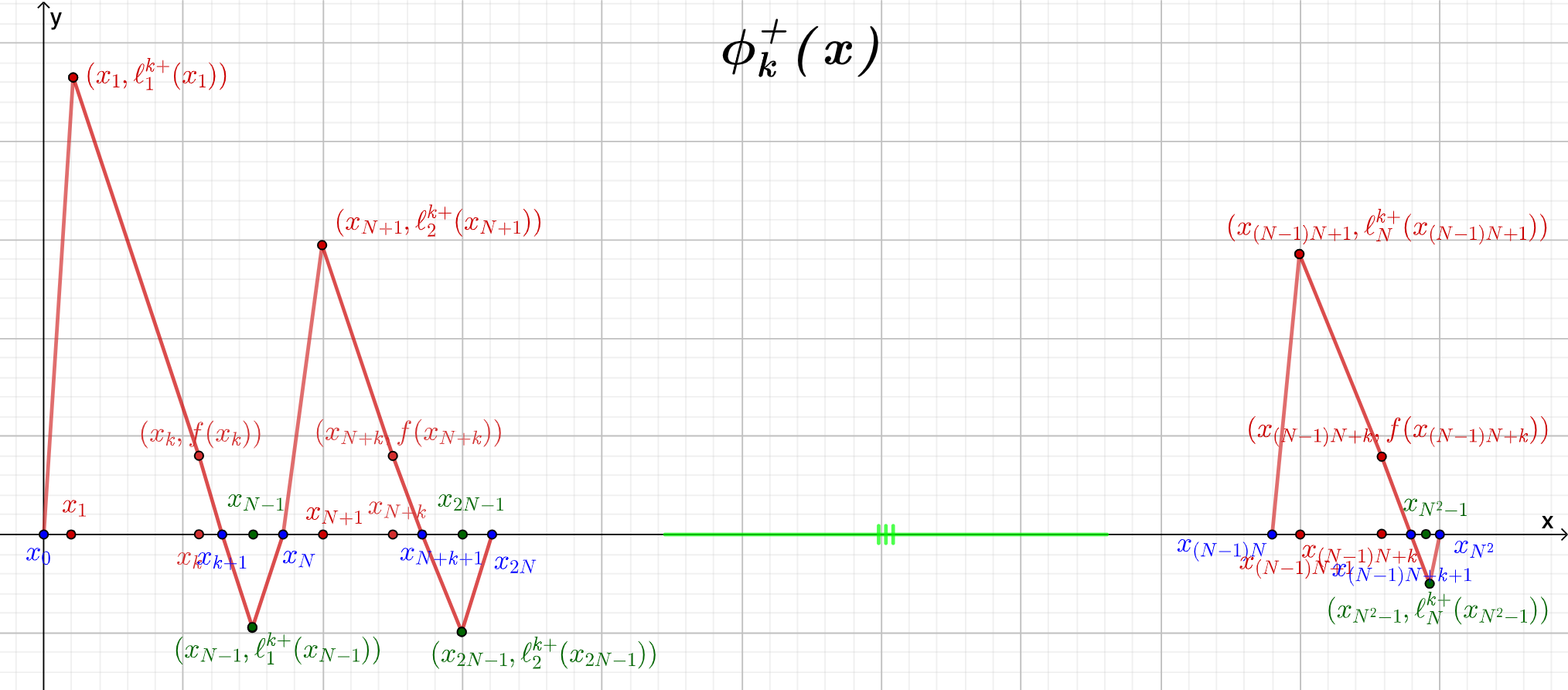} \\ \vspace{5pt} 
	\includegraphics[width=.75\textwidth]{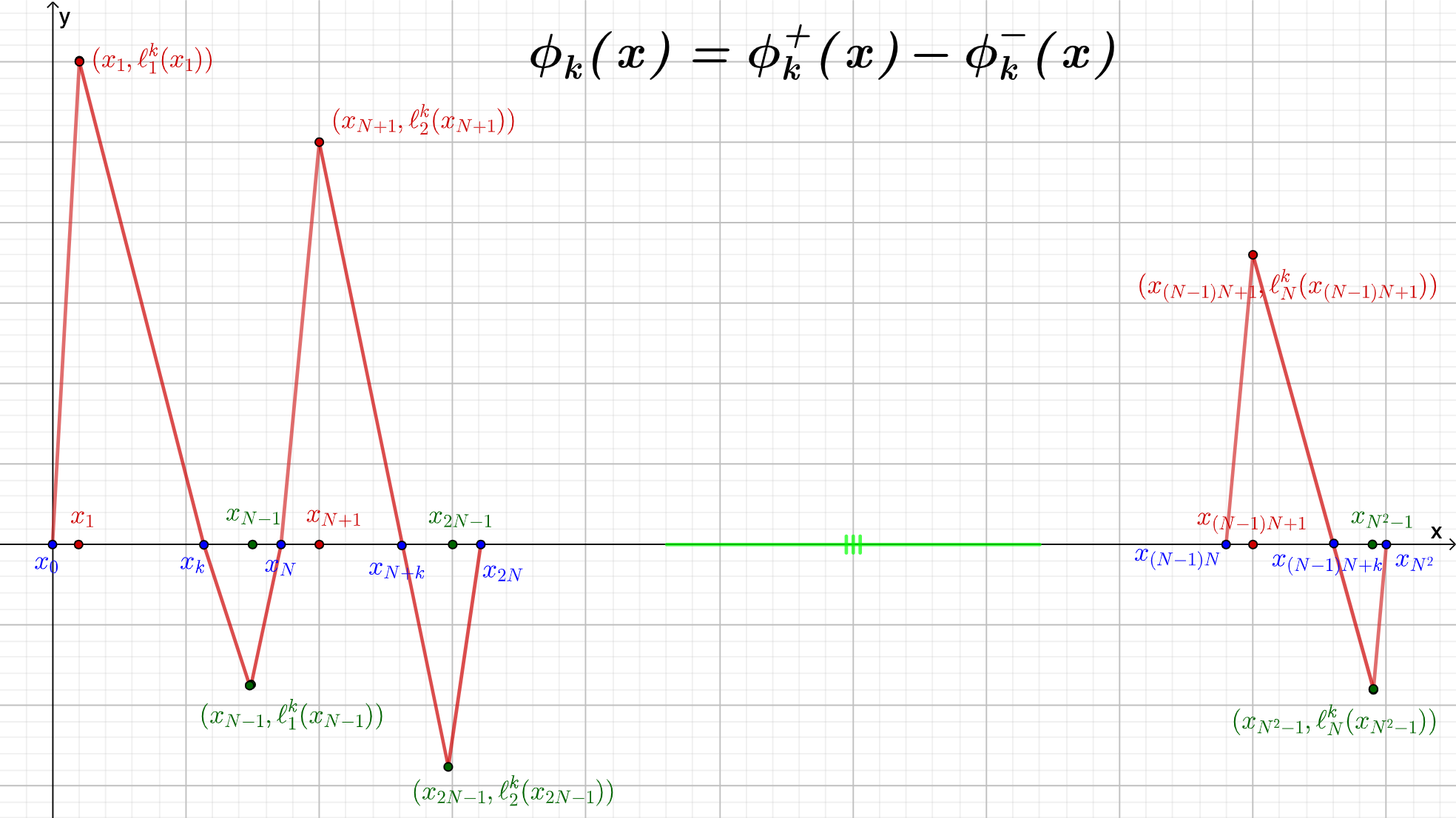}  
	\caption{Diagrams of $\phi_k^-(x)$, $\phi_k^+(x)$, and $\phi_k(x)$.}
	\label{fig:phi_k-+}
\end{figure}

\newpage
On this page, we show the diagram for $\varphi_k(x)$ as defined in \eqref{eq:varphik_interp}.
\begin{figure}[H]
	\centering
 \includegraphics[width=.95\textwidth]{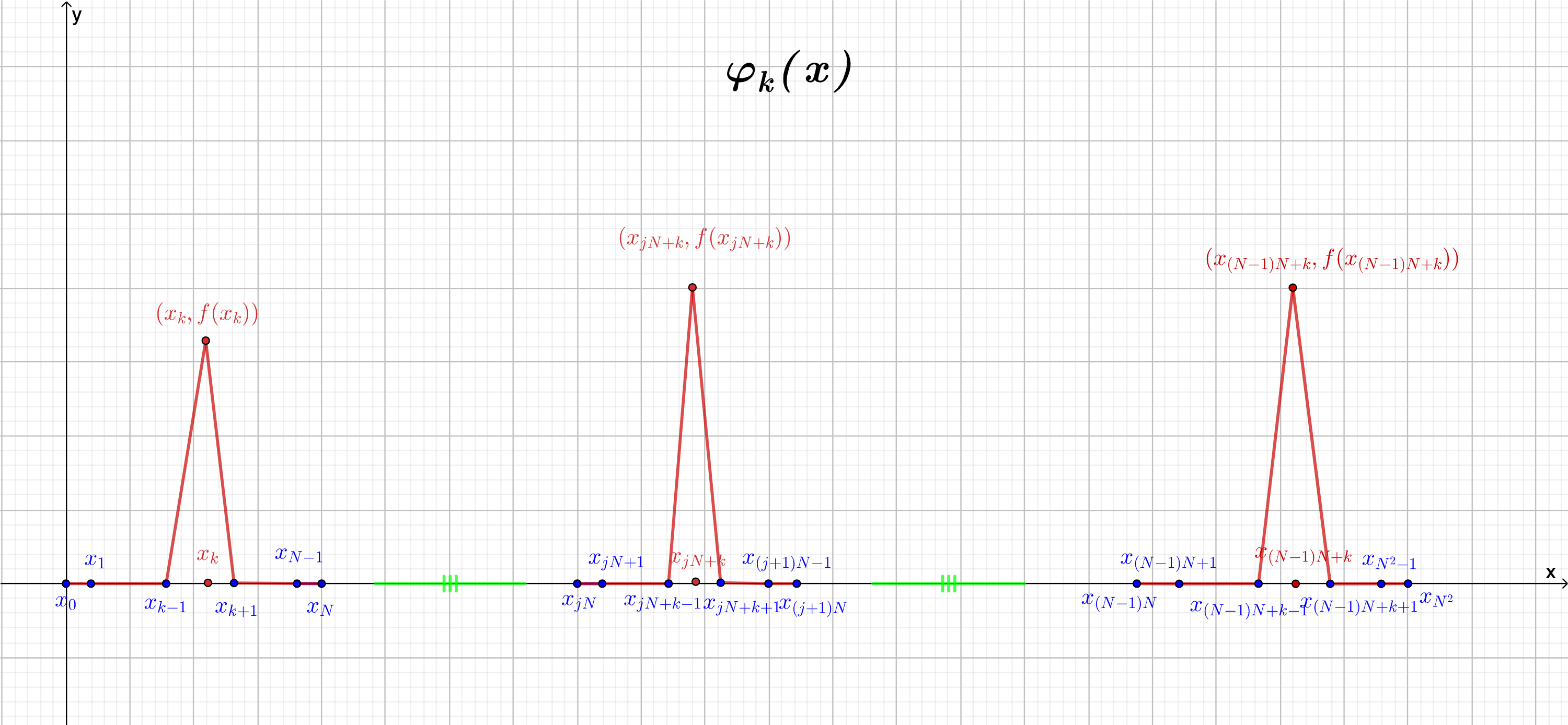} 
	\caption{Diagram of $\varphi_k(x)$.}
	\label{fig:varphi_k}
\end{figure}

\end{document}